\newcommand{\CN}[1]{\textcolor{red}{CN: #1}}
\newcommand{\LW}[1]{\textcolor{orange}{LW: #1}}
\newcommand{\TK}[1]{\textcolor{purple}{TK: #1}}
\newcolumntype{L}[1]{>{\raggedright\let\newline\\\arraybackslash\hspace{0pt}}m{#1}}
\newcolumntype{C}[1]{>{\centering\let\newline\\\arraybackslash\hspace{0pt}}m{#1}}
\newcolumntype{R}[1]{>{\raggedleft\let\newline\\\arraybackslash\hspace{0pt}}m{#1}}
\begin{document}

\title{On Scenario Formalisms for \\ Automated Driving}

\author{
	Christian Neurohr, Lukas Westhofen, Tjark Koopmann, Eike Möhlmann, Eckard Böde, Axel Hahn \\
	\email{\{firstname.lastname@dlr.de\}}
}

\institute{German Aerospace Center (DLR) e.V.,\\
	Institute of Systems Engineering for Future Mobility, \\ Oldenburg, Germany
}

\maketitle

% - EXTENDED DEADLINE: 14.12. (full contribution), 10.01. (review results), 15.02. (final contribution)
% - Max. Pages: 20
\begin{abstract}
	The concept of scenario and its many qualifications -- specifically logical and abstract scenarios -- have emerged as a foundational element in safeguarding automated driving systems.
	However, the original linguistic definitions of the different scenario qualifications were often applied ambiguously, leading to a divergence between scenario description languages proposed or standardized in practice and their terminological foundation.
	This resulted in confusion about the unique features as well as strengths and weaknesses of logical and abstract scenarios. 
	To alleviate this, we give clear linguistic definitions for the scenario qualifications concrete, logical, and abstract scenario and propose generic, unifying formalisms using curves, mappings to sets of curves, and temporal logics, respectively. 
	We demonstrate that these formalisms allow pinpointing strengths and weaknesses precisely by comparing expressiveness, specification complexity, sampling, and monitoring of logical and abstract scenarios. 
	Our work hence enables the practitioner to comprehend the different scenario qualifications and identify a suitable formalism.
\end{abstract}

\section{Introduction}
\label{sec:introduction}
% !TeX spellcheck = en_US
% !TeX root = ../main.tex

% Introduction to SBT
The scenario-based development \cite{nalic2020scenario,neurohr2020fundamental,iso21448} of automated driving systems (ADSs) at SAE Level $\ge 3$ \cite{sae2021definitions} complements the traditional safety processes of ISO 26262 \cite{iso26262}. 
In recent years, public research projects such as ENABLES3 \cite{leitner2020validation} as well as PEGASUS and VVM \footnote{\url{www.pegasusprojekt.de/en}, \url{www.vvm-projekt.de/en}} refined the scenario-based approach towards broader applicability. 
%As a consequence, the scenario-based approach was standardized in ISO 21448 \cite{iso21448} -- which complements the functional safety standard ISO 26262 -- as well as the ISO 3450x series on scenario-based testing \cite{iso3450x}. 

% Description of state of the art of definition of the term scenario
The focus of our work is the fundamental concept of scenario and its many qualifications.
The definitions for scene, situation, and scenario by Ulbrich et al.\ \cite{ulbrich2015defining} were established over time, with scene and scenario being standardized in ISO 21448 \cite{iso21448} and referenced in DIN SAE Spec 91381 \cite{dinsae2019} under slightly different formulations.
We hence use the standard notion of \emph{scenario} as a ``temporal development between several scenes in a sequence of scenes [...]'' where a \emph{scene} is ``a snapshot of the environment including the scenery and dynamic elements, as well as all actors’ and observers’ self-representations, and the relationships among those entities [...]'' \cite{ulbrich2015defining}. 

% Description of state of the art of scenario qualifications
Many qualifications of the term scenario, i.e.\ subcategories, were introduced to account for usage in different development phases.
First, Menzel et al.\ \cite{menzel2018scenarios} introduced functional scenarios for the concept phase, logical scenarios to represent technical requirements, and concrete scenarios for testing. 
While this scenario qualification concept was widely accepted and used, their definitions are linguistic.
As the concept spread, so did the variety of implementations.
To standardize scenario description, ASAM OpenSCENARIO was developed \cite{asamOpenSCENARIOXML}. 
Initially, it was designed to specify concrete scenarios, with later versions (OpenSCENARIO XML 1.1 and upwards) expanding to logical scenarios.
To cope with the increasing complexity of operational design domains (ODDs), e.g.\ for urban driving, the need for more formal, logic-based scenarios arose \cite{neucrit21,foretellixMSDL,asamOpenSCENARIODSL}.
These abstract scenarios are situated between functional and logical scenarios, but employ a declarative instead of an imperative approach (as concrete and logical scenarios do).

% Gap
Until now, the exact relationship between logical and abstract scenarios has not been explicated, leading to confusion about their respective strengths and weaknesses.
Adding onto this issue, practical implementations of scenario description languages sometimes deviated from the original definitions.
% Contribution
In order to close this gap, our work gives a long-due clarifying update of what concrete, logical, and abstract scenarios entail, informed by current practical usages.
We do so by proposing formal frameworks, i.e., \emph{scenario formalisms}, for the scenario qualifications.
Specifically, our work contributes
\begin{itemize}
	\item[(i)] a revision of the definitions of concrete, logical, and abstract scenarios establishing a formal basis for analysis,
	\item[(ii)] a comparison of logical and abstract scenarios w.r.t. expressiveness, specification complexity, sampling, and monitoring, and
	\item[(iii)] a guide for scenario-based practitioners for selecting an appropriate scenario qualification.
\end{itemize}

% Paper structure
We start by examining the history of scenario descriptions in Section \ref{sec:history}, where related work is put in chronological order and contextualized by the authors' experience. 
Based on this, we sharpen the existing definitions for concrete, logical, and abstract scenarios by giving formal and linguistic definitions in Section \ref{sec:formalization}, guided by the practical implementations identified in Section \ref{sec:history}.
This forms the basis for a theoretical comparison of logical and abstract scenarios made in Section \ref{sec:comparison}, which highlights that our formal framework enables a precise differentiation between both.
Finally, the findings are discussed from a practitioner's perspective in Section \ref{sec:discussion}.

\section{History of Scenario Qualifications}
\label{sec:history}
% !TeX spellcheck = en_US
% !TeX root = ../main.tex

We start by unraveling the genesis of the different qualifications of concrete, logical, and abstract scenarios in the context of automated driving, which also highlights related work.
As we are particularly interested in clarifying the boundary between logical and abstract scenarios, we examine various proposal of scenario description languages for both to establish a well-informed foundation.

\subsection{Concrete, Logical, and Functional Scenarios}
\label{subsec:history_scenario_qualifications}

While the linguistic definitions of scene and scenario are suitable for human comprehension, they need to be refined depending on the stage of the development process.
For this, Menzel et al. \cite{menzel2018scenarios} introduce three scenario qualifications: concrete, logical, and functional scenarios.
For completeness, we revisit their original linguistic definitions. 
	\begin{definition}[Concrete Scenario \cite{menzel2018scenarios}]
		``Concrete scenarios distinctly depict operating scenarios on a state space level. Concrete scenarios
		represent entities and the relations of those entities
		with the help of concrete values for each parameter
		in the state space.''
	\end{definition}
Note that from this definition it is not clear whether a concrete scenario merely specifies an operating scenario's starting conditions or whether it specifies a fixed sequence of scenes. We may assume the former, as concrete scenarios are to be used in the testing phase according to Menzel et al. In Section \ref{sec:formalization} we entangle this confusion by differentiating between (trajectory-level) concrete scenarios and attribute-level concrete scenarios.
	\begin{definition}[Logical Scenario \cite{menzel2018scenarios}]
		``Logical scenarios include operating scenarios on
		a state space level. Logical scenarios represent the
		entities and the relations of those entities with the
		help of parameter ranges in the state space. The
		 parameter ranges can optionally be specified with probability distributions. Additionally, the relations
		of the parameter ranges can optionally be specified
		with the help of correlations or numeric conditions.
		A logical scenario includes a formal notation of the
		scenario.''
	\end{definition}
In the development process, logical scenarios should be used at the level of technical requirements where a formal, machine-readable, and imperative scenario specification is needed. 
%As the state space of the logical scenario is represented using parameter ranges, it presents a first level of abstracting away from concrete values.
	\begin{definition}[Functional Scenario \cite{menzel2018scenarios}]
		``Functional scenarios include operating scenarios
		on a semantic level. The entities of the domain
		and the relations of those entities are described via
		a linguistic scenario notation. The scenarios are
		consistent. The vocabulary used for the description of functional scenarios is specific for the use case
		and the domain and can feature different levels of
		detail.''
	\end{definition}
Functional scenarios shall be used in early concept phases, e.g.\ for hazard analysis and risk assessment or stakeholder alignment.

When researchers in industry and academia started to put these concepts to practice, their linguistic definitions led to ambiguous usages.
Functional scenarios emerged as informal scenario descriptions with a strong focus on readability and visualization for humans \cite{graubohm2020towards}.
%In a publication by Bagschik et al., scenes have been automatically generated based on a formal ontology using description logic \cite{bagschik2018}. These scenes be could used to create formal, machine-readable functional scenarios, but this was not in the authors' scope.
At the other end of the spectrum, concrete scenarios gained traction when OpenSCENARIO, a machine-readable, imperative language, was advertised by VIRES within the projects PEGASUS/ENABLE-S3 \cite{viresOpenSCENARIO}, which was later transferred to ASAM for standardization \cite{asamOpenSCENARIOXML}.
In order to have a bottom-up generalization of concrete scenarios based on e.g. real-world data, scenario-based practitioners abstracted from fixed parameter values to parameter ranges, introducing degrees of freedom.
Sometimes, parameter ranges are accompanied by probability distributions. 
In this way, the instantiation of logical scenarios is reduced to sampling from parameter ranges (according to their distributions) prior to execution.

Examples usages of logical scenarios include the assurance of automated driving on highways \cite{weber2019framework}, performance evaluation of ADSs \cite{zhang2022performance}, or clustering of scenarios from real-world data, among many others, cf.~\cite[Section~3]{neurohr2020fundamental}.
OpenSCENARIO V1.1 extended the standard to support logical scenarios in the same way.
This was used by a car manufacturer to specify logical scenarios for an automated lane keeping system\footnote{\url{www.github.com/openMSL/sl-3-1-osc-alks-scenarios}}, derived from the UN Regulation No. 157 \cite{unece157}.

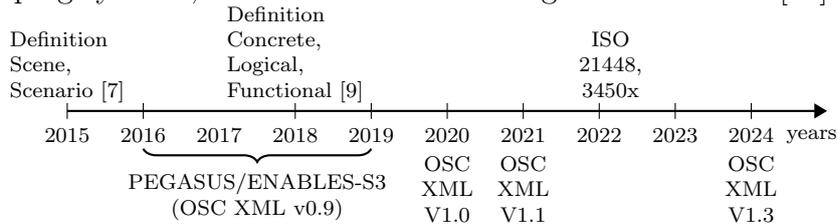
\begin{figure}
	%\vspace{0.25cm}
	\centering
	% !TeX spellcheck = en_US

\begin{tikzpicture}
	% draw horizontal line   
	\draw[thick, -Triangle] (0,0) -- (10,0) node[font=\scriptsize,below left=3pt and -8pt]{years};
	
	% draw vertical lines
	\foreach \x in {0,1,3,4,5,6,7,8,9}
	\draw (\x cm,3pt) -- (\x cm,-3pt);
	
	\foreach \x/\descr in {0/2015, 1/2016, 2/2017, 3/2018, 4/2019, 5/2020, 6/2021, 7/2022, 8/2023, 9/2024}
	\node[font=\scriptsize, text height=1.75ex,
	text depth=.5ex] at (\x,-.3) {$\descr$};
	
	%\node[font=\scriptsize, text height=1.75ex, text depth=.5ex] at (2,-.3) {$\dots$};
	
	\node[align=left, font=\scriptsize, text height=1.75ex,
	text depth=.5ex] at (0,0.3) {Definition \\ Scene, \\ Scenario \!\! \cite{ulbrich2015defining}};

	\node[align=left, font=\scriptsize, text height=1.75ex,
	text depth=.5ex] at (3,0.3) {Definition \\ Concrete, \\ Logical, \\ Functional \!\! \cite{menzel2018scenarios}};

	\node[align=center, font=\scriptsize, text height=1.75ex,
	text depth=.5ex] at (5,-1.35) {OSC \\ XML \\ V1.0 };
	
	\node[align=center, font=\scriptsize, text height=1.75ex,
	text depth=.5ex] at (6,-1.35) {OSC \\ XML \\ V1.1 };
	
	\node[align=center, font=\scriptsize, text height=1.75ex,
	text depth=.5ex] at (7.15,0.3) {ISO \\ 21448, \\ 3450x};
	
		%\node[align=center, font=\scriptsize, text height=1.75ex,
	%text depth=.5ex] at (7,-1.35) {OSC \\ XML \\ V1.2 };
	
	\node[align=center, font=\scriptsize, text height=1.75ex,
	text depth=.5ex] at (9,-1.35) {OSC \\ XML \\ V1.3 };
	
	\draw [thick,decorate,decoration={brace,amplitude=5pt}, align=center] (4,-.5) -- +(-3,0)
	node [black,midway,font=\scriptsize, below=5pt] {PEGASUS/ENABLES-S3 \\ (OSC XML v0.9)};
\end{tikzpicture}
	\vspace{-0.5cm}
	\caption{Time of scenario description activities.}
	\label{fig:timeline_scenarios_description}
\end{figure}

Note that the definition of logical scenarios allows for relations on parameter ranges, e.g.\ $v_{\text{actor}_1} < v_{\text{actor}_2}$.
As this has not been recognized in practice, it was not standardized. 
As can be seen on the time of \autoref{fig:timeline_scenarios_description}, OpenSCENARIO XML is being actively developed. 
%However, there were no fundamental changes regarding the supported types of scenarios -- the standard still covers concrete and logical scenarios.

We conclude that the emergence of concrete scenarios followed from the need of specifying traffic scenarios in simulation, particularly for testing. 
Logical scenarios suggested themselves as a straight-forward extension, enabling the sampling of concrete scenarios.

\subsection{Abstract Scenarios}
\label{subsec:history_formal}

Parallel to the history of the technology-driven scenario descriptions introduced above -- largely fueled by practical needs -- lies the history of declarative traffic scenario specification based on logics. 
%Initially, research on formal scenario specifications was driven by academia. 

In 2011, Hilscher et al.\ \cite{hilscher2011abstract} presented the \emph{Multi-Lane Spatial Logic} (MLSL) for formally proving safety of multi-lane highway traffic, based on Interval Temporal Logic and Duration Calculus.
MLSL deliberately does not consider vehicle dynamics.
Both liveness \cite{Schwammberger18a} and fairness \cite{Schwammberger2019fair} were examined. 
Although the logic itself is undecidable, there is implementation work under some finiteness assumptions \cite{Fraenzle2015}.

A different approach based on visualization, called \emph{Visual Logic} (VL), was introduced by Kemper and Etzien in 2014 \cite{kemper2014visual} to specify sequences of highway traffic situations for driver assistance systems. 
It was designed for communicating about scenarios in an interdisciplinary setting, e.g.\ between systems engineers, traffic psychologists, and safety engineers. 
Besides communication, VL focused on spatio-temporal logics with \emph{monitoring} as a potential use case.

In 2017, using VL as inspiration, Damm et al.\ introduced Traffic Sequence Charts (TSCs) as a visual formalism for the declarative specification of scenarios \cite{damm2017tsc,damm2018formal}. 
TSCs use multi-sorted first-order real-time logic by interpreting formulae over a world model. 
In contrast to MLSL, TSCs consider vehicle dynamics.
The objective of visual interdisciplinary communication was extended by requirement specification, scenario catalogs specification, and sampling.
TSCs were subsequently considered for e.g.\ consistency \cite{becker2020partial}, play-out \cite{becker2022simulation}, in the development process \cite{damm2018using}, runtime monitoring \cite{grundt2022rm}, knowledge formalization \cite{borchers2024using}, and a virtual verification toolchain \cite{borchers_tsc2carla_2025}.

%A second strategy is using non-linear SMT solving, which has already shown promising results for runtime verification of requirements in concrete scenarios \cite{matos2022monitoring}.
Subsequently, an automotive tool vendor started to define abstract scenarios using a constraint language (called Traffic Phases from now on) \cite{eggers2018constraint}. 
They translate scenarios into non-linear SMT problems \cite{scheibler2019solving}, which can subsequently be solved algorithmically.

Shortly afterwards, Foretellix released the \emph{Measurable Scenario Description Language} (M-SDL) \cite{foretellixMSDL}, a ``mostly declarative programming language'' that focuses on the specification of abstract scenarios for sampling and monitoring. 
Around the same time, Fremont et al. introduced \emph{Scenic} \cite{fremont2019scenic,fremont2023scenic}, which is a probabilistic programming language for specifying declarative constraints on scene-level. 
For their main use case -- sampling of scenarios -- they rely on generating scenarios scene by scene with probabilistic models, checking feasibility of the current state in every step. 
In contrast to SMT-based sampling approaches, Scenic is highly dependent on simulation models.

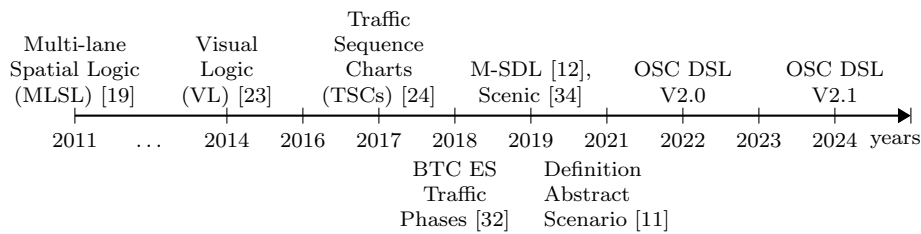
\begin{figure}
	\vspace{0.25cm}
	\centering
	% !TeX spellcheck = en_US

\begin{tikzpicture}
	% draw horizontal line   
	\draw[thick, -Triangle] (-1,0) -- (10,0) node[font=\scriptsize,below left=3pt and -8pt]{years};
	
	% draw vertical lines
	\foreach \x in {-1,1,2,3,4,5,6,7,8,9,10}
	\draw (\x cm,3pt) -- (\x cm,-3pt);
	
	\foreach \x/\descr in {-1/2011, 01/2014, 2/2016, 3/2017, 4/2018, 5/2019, 6/2021, 7/2022, 8/2023, 9/2024}
	\node[font=\scriptsize, text height=1.75ex,
	text depth=.5ex] at (\x,-.3) {$\descr$};
	
	\node[font=\scriptsize, text height=1.75ex,
	text depth=.5ex] at (0,-.3) {$\dots$};
	
	\node[align=center, font=\scriptsize, text height=1.75ex,
	text depth=.5ex] at (-1,.3) {Multi-lane \\ Spatial Logic \\ (MLSL) \cite{hilscher2011abstract}};
	
	\node[align=center, font=\scriptsize, text height=1.75ex,
	text depth=.5ex] at (1,.3) {Visual \\ Logic \\ (VL) \cite{kemper2014visual}};
	
	%\node[align=center, font=\scriptsize, text height=1.75ex,
	%text depth=.5ex] at (2,-1) {Urban \\ MLSL \!\! \cite{hilscher2016mlsl}};
	
	\node[align=center, font=\scriptsize, text height=1.75ex,
	text depth=.5ex] at (3,.3) {Traffic \\ Sequence \\ Charts  \\ (TSCs) \cite{damm2017tsc}};
	
	\node[align=center, font=\scriptsize, text height=1.75ex,
	text depth=.5ex] at (4,-1.35) {BTC ES \\ Traffic \\ Phases \cite{eggers2018constraint}};
	
	\node[align=left, font=\scriptsize, text height=1.75ex,
	text depth=.5ex] at (6.0,-1.35) {Definition \\ Abstract \\ Scenario \cite{neucrit21}};
	
	\node[align=center, font=\scriptsize, text height=1.75ex,
	text depth=.5ex] at (5,.3) {M-SDL \cite{foretellixMSDL}, \\ Scenic \cite{fremont2019scenic}};
	
	\node[align=center, font=\scriptsize, text height=1.75ex,
	text depth=.5ex] at (7,.3) {OSC DSL \\ V2.0 };
	
	\node[align=center, font=\scriptsize, text height=1.75ex,
	text depth=.5ex] at (9,.3) {OSC DSL \\ V2.1 };
\end{tikzpicture}
	\vspace{-0.75cm}
	\caption{Timeline of abstract scenario specification languages.}
	\label{fig:timeline_abstract_scenarios}
\end{figure}
In 2021, Neurohr et al. provided the first linguistic definition of the term abstract scenario.
\begin{definition}[original, cf.~\cite{neucrit21}]
	``An abstract scenario
	is a formalized, declarative description of a traffic scenario focusing on complex relations, particularly on causal
	relations.''
	\label{def:abstract_scenario_original}
\end{definition}
They related the abstract scenario with the scenario qualifications of Section \ref{sec:history} -- situated between the functional and the logical scenario regarding their abstraction level \cite[Figure~14]{neucrit21}.
%Except for the subclause mentioning ``causal relations'' (which is very specific to their criticality analysis), this linguistic definition of abstract scenario already subsumes most formal approaches to scenario specification. Nonetheless, it will be revised in \autoref{sec:formalization}.

At this point, a shared industry interest in declarative scenario specification languages had developed. 
In 2022, this converged to the OpenSCENARIO DSL standard, a domain-specific language with a custom temporal logic over dense finite traces with a strong focus on abstract scenario specification. 
Notably, it adopts \autoref{def:abstract_scenario_original}. 

\section{Definition and Formalization of Scenario Qualifications}
\label{sec:formalization}
% !TeX spellcheck = en_US
% !TeX root = ../main.tex

As Section \ref{sec:history} shows, various mechanisms for specifying scenarios arose over time, which were, however, not aligned.
To clearly compare and explicate strengths and weaknesses of the mechanisms, we now present a \emph{unifying formal framework} for concrete, logical, and abstract scenarios. 
For each qualification, we also derive an updated linguistic definition.
Note that functional scenarios are natural-language specifications and not considered further.
We start with concrete scenarios as the unifying foundation of logical and abstract scenarios.

\subsection{Concrete Scenarios}
\label{subsec:construction_concrete_scenarios}

We start with a bottom-up formalization of concrete scenarios based on \emph{scenes}.
Moreover, we restrict to continuous time here, although we acknowledge discrete time as an important facet as well.
We also deliberately avoid specifying which elements are contained in a scene.

\begin{remark}[Continuity on Sets of Scenes]
	To disregard invalid scenarios, we require a notion of continuity on sets of scenes.
	Therefore, we assume that there exists a metric $d : \mathcal{S} \times \mathcal{S} \rightarrow \mathbb R$ such that $(\mathcal{S},d)$ is a metric space where
	continuity can be defined via the $\varepsilon$-$\delta$-criterion.
	\label{remark:scenes_continuity}
\end{remark}

\begin{comment}
	In particular for the scenario-based approach, the subsets $\Tilde{\mathcal{S}}$ may result from restricting $\mathcal{S}$ to typical scenario conditions such as a fixed number of actors and road network.
\end{comment}

\begin{example}
	Consider that time-dependent variables in a scene are positions, velocities, and accelerations of actors. Then, the set of scenes restricted to a road network and a fixed set of actors together with the Euclidean distance becomes a metric space.
\end{example}

\begin{definition}[Concrete Scenario]
\label{def:concrete_scenario}
For a connected time domain $T$ a \emph{concrete scenario} is a piecewise continuous map $$C \colon T \rightarrow \mathcal{S}, t \mapsto C(t)$$ from $T$ to the set of scenes $S$, with starting scene $S_0 = C(0)$. 
% For the remainder of this work, we refer to trajectory-level concrete scenarios as \emph{concrete scenarios}.
The set of all concrete scenarios over $T$ is denoted by $\mathcal{C}^T$. 
$\mathcal{C}$ describes the set of all concrete scenarios, independent of their time domain.
\end{definition}

From this definition, a linguistic version is easily derived:
\begin{tcolorbox}[colback=white, boxrule=0.5pt]
	Given a starting scene and a time domain, a \emph{concrete scenario} is a \emph{seamless assignment} of \emph{points in time} to \emph{scenes}.
\end{tcolorbox}

Essentially, concrete scenarios are continuous extensions of a time series of real-world measurements or of logged simulation data within a limited duration. In a sense, they are \emph{post-execution} scenarios -- where the executioner could refer to traffic reality, a proving ground set up, or a simulator. In the context of OpenSCENARIO DSL, these are also known as \emph{trajectory-level concrete scenarios}.

However, for applications such as scenario-based testing, practitioners may want to specify concrete test cases independent of the execution of the system under test (SuT) and other involved actors.
This requires a notion of \emph{pre-execution} concrete scenario, for which we adopt the term \emph{attribute-level concrete scenarios} used by OpenSCENARIO DSL. 
We start by introducing \emph{deterministic models}.

\begin{definition}[Deterministic Model]
	\label{def:det-model}
A \emph{deterministic model} is a dynamical system $(\Theta, \mathcal{S}, \varphi)$ with time domain $\Theta$, $0 \in \Theta$, and state space $\mathcal{S}$, the set of all scenes, together with a piecewise continuous evolution function $\varphi \colon  \Theta \times \mathcal{S} \rightarrow \mathcal{S}$ such that 
	\begin{align}
		\begin{split}
			\label{eq:deterministic_model}
			\varphi(0, S_0) & = S_0 \in \mathcal{S} \text{ and } \\ \varphi(\theta_2, \varphi(\theta_1, S)) & = \varphi(\theta_1 + \theta_2, S).
		\end{split}
	\end{align} 
	%The application on an attribute-level concrete scenario is done by requiring $\Phi(0, S) = \alpha(0)$.
	\begin{comment}
	\CN{Are we losing the 'real world data' case here?}\TK{Don't think so, humans extrapolate from a given history of scenes and act based on that in the next timespan. In our formulation we should probably change it to 'to an element of the set of all scenes' or simply 'to a scene'.}
	\end{comment}
\end{definition}

\begin{tcolorbox}[colback=white, boxrule=0.5pt]
	 A \emph{deterministic model} for an actor assigns a \emph{history of scenes} (i.e.\ a concrete scenario) to an element of the \emph{state space} of that actor.
\end{tcolorbox}

\begin{remark}
Note that a finite family of deterministic models $\Phi = ((\Theta_i, \mathcal{S}, \varphi_i))_{i=1}^N$, $N \in \mathbb{N}$, can again be understood as a deterministic model with time domain $\bigcap_{i = 1}^N \Theta_i$ if the models do not contradict each other.
In case the models $\varphi_i$ and $\varphi_j$ contradict each other at time $t > 0$, replace $\Theta_i$ by $\Theta_i \cap (-\infty, t - \epsilon]$, $\epsilon > 0$ small.
Since the models do not contradict at time $0$ by definition, the smallest possible time domain is $\{0\}$.
\end{remark}

\begin{comment}
\begin{example}
	Assuming they are all deterministic models in the sense of \autoref{def:det-model}, an environment model $\varphi_E$, a weather model $\varphi_W$, two vehicle models $\varphi_{V,1}, \varphi_{V,2}$ and two pedestrian models $\varphi_{P,1}, \varphi_{P,2}$ will together form a deterministic model.
	However, since there may be an interaction of these models, in which, for example, $\varphi_{V,1}$ has to wait for $\varphi_{P,2}$ to cross the road due to a zebra crossing in the environment model $\varphi_E$ while the weather model $\varphi_W$ adds fog to the scenario, the combined model may have a more complex form than the individual parts.
\end{example}
\end{comment}

In this work, we assume time domains of the form $T(\Phi,S_0) = [0,t_{\sup}(\Phi,S_0))  \subset \mathbb R$, where $t_{sup}(\Phi, S_0)$ may be infinite, or a discretization of $T(\Phi, S_0)$.
The set-valued function $T$ assigns to each combination of a family of models $\Phi$ and a starting scene $S_0$ the minimal time domain that is required for each specified model to reach one of their end states from the starting scene.
Whenever a model in $\Phi$ does not reach one of their end states we have that $T(\Phi, S_0) = [0, \infty)$.
For certain applications the time domain is given and the models are calibrated to conform to it.
It is then assumed that every model is defined for at least the specified time domain and $t_{sup}$ is constant, i.e. $t_{sup}(\cdot, \cdot) \equiv c \in \mathbb{R} \cup \{\infty\}$.
In these cases we may identify the constant function $T(\cdot, \cdot) \equiv [0, c]$ or $T(\cdot, \cdot) \equiv [0, \infty)$ with the interval $[0, c]$ or $[0, \infty)$, respectively, and refer to it simply as $T$.

Based on \autoref{def:det-model}, we can now define attribute-level concrete scenarios (or \emph{pre-execution concrete scenarios}). Note that this is a first step abstracting away from (post-execution) concrete scenarios.

\begin{definition}[Attribute-level Concrete Scenarios]
\label{def:attr-level-con-scen}
	Let $\mathcal{M}$ be the set of deterministic models, $T(\cdot, \cdot)$ a time domain assignment, and $S_0$ a starting scene. An \emph{attribute-level concrete scenario} is a family of maps
	\begin{equation}
		(\alpha^\Phi_{S_0})_{\Phi \in [\mathcal{M}]^{<\omega}} \quad \text{with} \quad \alpha^\Phi_{S_0} \colon T(\Phi,S_0) \rightarrow \mathcal{S}, t \mapsto \Phi(t, S_0)\,,
	\label{eq:attribute_level_concrete_scenario}
	\end{equation}
	where $[\mathcal{M}]^{<\omega}$ denotes the finite subsets of $\mathcal{M}$.
	By construction, each family member is a concrete scenario in the sense of \autoref{def:concrete_scenario}.
	The set of all attribute-level concrete scenarios is denoted by $\mathcal{C}_\mathit{attr}$. 
\end{definition}
%We arrive at the following linguistic definition (adapted from \cite{asamOpenSCENARIODSL}):
	\begin{tcolorbox}[colback=white, boxrule=0.5pt]
	An \emph{attribute-level concrete scenario} is an imperative specification from which, given a fixed set of deterministic models, a unique concrete scenario can be determined.
\end{tcolorbox}

\begin{example}
		\label{ex:concrete-scenario}
		Consider scenes that contain the 2-dimensional position and velocity of one actor $A$, i.e.\ $\mathcal{S} = \mathbb{R}^{4}$ over a time domain $[0,20]$, with initial position $p(0) = (-50, 100)^{\mathsf{T}}$ and constant velocity vector  $v(t) = (10,-5)^{\mathsf{T}}$. Then,
		\begin{equation}
			\label{eq:concrete_scenario_example}
			C \colon [0,20] \rightarrow \mathbb{R}^{4}, t \mapsto (-50 + 10 \cdot t, 100 - 5\cdot t, 10, - 5)^{\mathsf{T}}
		\end{equation}
		is a concrete scenario with starting scene $S_0 = (-50,100,10,-5)^{\mathsf{T}}$.
		% In this sense, a concrete scenario is the flow through $S_0$ of a deterministic model $\Phi$.
	Keeping the deterministic model variable and taking $T(\Phi,S_0)=[0,20]$ as constant, we obtain an attribute-level concrete scenario 
	\begin{equation*}
			\alpha^\Phi_{S_0} \colon [0,20] \rightarrow \mathcal{S}, t \mapsto \Phi(t, S_0)\,,
	\end{equation*}
	where $\Phi \in \mathcal{M}$ is any model defined on $T=[0,20]$ and $\mathcal{S} = \mathbb{R}^{4}$. Then, for
	 the deterministic model
	\begin{equation*}
		\varphi \colon [0,\infty) \times \mathbb{R}^4 \rightarrow \mathbb{R}^4, \left(t,
		\begin{pmatrix}
			s_0 \\
			s_1 \\
			s_2 \\
			s_3 
		\end{pmatrix}\right)
		\mapsto \begin{pmatrix}
			s_0 + 10 \cdot t \\
			s_1 - 5\cdot t  \\
			s_2 \\
			s_3 
		\end{pmatrix}
	\end{equation*}
	we have that $\alpha^\varphi_{S_0} = C$ is the concrete scenario from equation \eqref{eq:concrete_scenario_example}.
\end{example}

\begin{remark}
	Note that attribute-level concrete scenarios abstract from concrete scenarios by keeping models variable.
	It merely specifies the pre-execution conditions in form of the starting scene.
	When models are fixed, the concrete scenario is then dynamically created by letting the models run.
	The piecewise continuity follows from the piecewice continuity of deterministic models required by \autoref{def:det-model}.
\end{remark}

\subsection{Logical Scenarios}
\label{subsec:construction_logical_scenarios}
We previously constructed concrete scenarios bottom-up by using scenes.
Similarly, we now define (attribute-level) logical scenarios.

\begin{definition}[(Attribute-level) Logical Scenario]
	\label{def:logical_scenario}
	Let $\mathcal{M}$ be the set of deterministic models. 
	An \emph{attribute-level logical scenario} is a map
	\begin{align}
		\begin{split}
			\label{eq:logical_scenario}
			 L_{\mathcal{M}} \colon X \rightarrow \mathcal{C}_{attr}\,,
			 x \mapsto \left(\alpha_{S_0(x)}^{\Phi(x)}\right)_{\Phi \in [\mathcal{M}]^{<\omega}}
		\end{split}
	\end{align}
	with parameter space $X \subset \mathbb{R}^n$ of dimension $n \in \mathbb N$ where $x \in X$ configures the starting scene $S_0^x$ and the models $\Phi^x$.
	\begin{comment}
	\begin{equation*}
		X = \prod\limits_{i=1}^{n-k} [a_i, b_i] \times \prod\limits_{j=n-k+1}^{n} \{c^1_j, \dots, c^{m_j}_j\} \subset \mathbb{R}^{n-k} \times \mathbb{Z}^{k}\,,
	\end{equation*}
	where the vector of parameters $x \in X$ configures the starting scene $S_0^x$ and the models $\Phi^x$ and $n \in \mathbb N$.
	\end{comment}
	%$k$ is the number of discrete parameters and $m_j$ denotes the cardinality of the set indexed with $j$.
	By construction, $L_{\mathcal{M}}(x)$ is a attribute-level concrete scenario for each $x \in X$.
	
	Analogously, for fixed $\Phi \in [\mathcal{M}]^{<\omega}$, a \emph{logical scenario} is a map
	\begin{equation*}
		 L_{\Phi} \colon X \rightarrow \mathcal{C}\,,
		x \mapsto \alpha_{S_0(x)}^{\Phi(x)}\,.
	\end{equation*}
	By construction, $L_{\Phi}(x) \in \mathcal{C}$ is a concrete scenario for all $x \in X$.
\end{definition}

From this formal definition we easily derive a simple linguistic definition of logical scenarios:
\begin{tcolorbox}[colback=white, boxrule=0.5pt]
	An \emph{attribute-level logical scenario} is an assignment of elements of a \emph{parameter space} to \emph{attribute-level concrete scenarios}. 
	\\ \\
	For a fixed set of deterministic models, a \emph{logical scenario} is an assignment of elements of a parameter space to \emph{concrete scenarios}.
\end{tcolorbox}
	
\begin{remark}[Parameter Distributions in Logical Scenarios]
	In practice, the specification of logical scenarios often entail a probability distribution over the parameter space. We abstain from this here, but instead discuss it in the context of sampling Section \ref{subsec:comparison_sampling}.
\end{remark}

\subsection{Abstract Scenarios}
\label{subsec:construction_abstract_scenarios}		

Abstract scenarios are a formal and declarative description constraining traffic happenings \cite{neucrit21}, which is typically done by using logics.
As we are interested in constraining concrete scenarios, we first define what properties a suitable logic for abstract scenario shall possess.
For this, it is natural to extend the ideas of \autoref{def:det-model} and \autoref{def:attr-level-con-scen} to temporal logics.
Intuitively, a scenario logic constrains the set of possible concrete scenarios according to some formula given a finite history and a time point.
Note that the concrete scenarios must not necessarily be of the same duration.
\begin{definition}[Scenario Logic]
	\label{def:scenario-logic}
	Let $T$ be an unbounded time domain, $\mathcal{S}$ the set of all scenes, and $\mathcal{C}$ all concrete scenarios independent of their time domain.
	$T^C$ denotes the time domain of $C \in \mathcal{C}$.
	We write $C \subseteq C'$ if $C$ is a prefix of $C'$.
	A \emph{scenario logic} is a finite temporal logic $\Lambda$ with denotational semantics
	\begin{equation*}
		\llbracket \cdot, \cdot, \cdot \rrbracket \colon \Lambda \times \mathcal{C} \times T \rightarrow 2^\mathcal{C}\,.
	\end{equation*}
	for which holds that for all $\lambda \in \Lambda$, $C \in \mathcal{C}$, and $t, t_1, t_2 \in T$
	\begin{align*}
		\llbracket \lambda, C, 0 \rrbracket & = \{C\}\text{,}\\
		\llbracket \lambda, C, t_1 + t_2 \rrbracket & = \bigcup_{C_1 \in \llbracket \lambda, C, t_1 \rrbracket} \llbracket \lambda, C_1, t_2 \rrbracket \text{, and}\\
		\forall C' \in \llbracket \lambda, C, t \rrbracket &\colon\ C \subseteq C'
	\end{align*}
	For any $\lambda \in \Lambda$, $C \in \mathcal{C}$, and $t \in T$ we demand that any concrete scenario in $\llbracket \lambda, C, t \rrbracket$ is piecewise continuous.
	Moreover, we require $\Lambda$ supporting the operation $\lambda_1 \wedge \lambda_2 \in \Lambda$ with $\llbracket \lambda_1 \wedge \lambda_2, C, t \rrbracket \coloneq \llbracket \lambda_1, C, t \rrbracket \cap \llbracket \lambda_2, C, t \rrbracket$. 
	The set of concrete scenarios of a formula $\lambda \in \Lambda$ is defined as
	$\mathcal{C}(\lambda) = \{ C \in \mathcal{C} \mid \forall t \in T\colon \llbracket \lambda, C, t \rrbracket = \{ C \}\} \cup \{ C \in \mathcal{C} \mid \forall t \in T^C\colon\ \exists t' \in T\colon C_{<t+t'} \in \llbracket \lambda, C_{<t}, t' \rrbracket \}$ (the first defining bounded and the second set unbounded concrete scenarios).
\end{definition}
$\llbracket \cdot, \cdot, \cdot \rrbracket$ represents a branching semantics, i.e.\ it allows the logic to create \emph{trees} of concrete scenarios (or, in other words, allows branching at each point in time, based on the scenario history).
Moreover, there is a natural connection to \autoref{def:det-model}: 
Intuitively, a formula $\lambda$ is a deterministic model if for all $C \in \mathcal{C}$ and $t \in T$, $|\llbracket \lambda, C, t \rrbracket| \leq 1$.

Abstract scenarios are simply formulae in a scenario logic, which are split up in a world model (a set of models of actor behaviors) and the scenario constraints (restricting the world model further).

\begin{definition}[Abstract Scenario]
	\label{def:abstract-scenario_new}
	Let $T$ be a time domain, $\Lambda$ a scenario logic over $T$, and $\mathcal{S}$ the set of all scenes.
	An \emph{abstract scenario} is a tuple $A = (\lambda, \omega_1, \dots, \omega_n) \in \Lambda^{n+1}$, where $\lambda$ are the scenario constraints and $\omega_1, \dots, \omega_n$ form a world model.
	Its set of concrete scenarios is defined as $\mathcal{C}(A) = \mathcal{C}(\lambda \wedge \bigwedge_{i=1}^n \omega_i)$.
\end{definition}

The formal definitions given prior can be understood in the following, intuitive way, sharpening \autoref{def:abstract_scenario_original}:
\begin{tcolorbox}[colback=white, boxrule=0.5pt]
	An \emph{abstract scenario} is a \emph{declarative} specification of a set of concrete scenarios that uses a \emph{temporal logic} to restrict the set of all possible scenarios as described by a world model.
\end{tcolorbox}
Note that the focus on ``complex relations'' has been dropped compared to \autoref{def:abstract_scenario_original}, as the term, even from a practical perspective, remained rather vague.
However, two essential aspects are brought out: We require a temporal logic and a declarative semantics.
Both can be found in \autoref{def:scenario-logic} -- having $T$ in the domain of $\llbracket \cdot, \cdot, \cdot \rrbracket$ and using a denotational semantics by restricting $2^\mathcal{C}$.

\begin{example}
	Consider the set of scenes $\mathcal{S} = \mathbb{R}^4$ used in \autoref{ex:concrete-scenario} and $T = [0,\infty)$.
	Our example scenario logic allows for constants $\mathbb{R}^4$, a unary operator $\lozenge$ (eventually), and binary operators $\vee$ and $\wedge$, i.e.\ it adheres to the grammar $\lambda \Coloneqq \mathbb{R}^4 \mid \lozenge \lambda \mid \lambda \vee \lambda \mid \lambda \wedge \lambda$.
	For $x \in \mathbb{R}^4$, $t \in [0, \infty)$, and a concrete scenario $C\colon T^C \to \mathbb{R}^4$, their semantics is
	\begin{align*}
		\llbracket \lambda, C, t \rrbracket &= \{ C \}  \text{ if } t_\mathit{max}^{T^C} \geq 20 \text{, and otherwise}\\
		\llbracket x, C, t \rrbracket &= 
		\begin{cases}
			\{ C \} & \text{ if } t = 0 \text{ and } C(t_\mathit{max}^{T^C}) = x\\
			\emptyset & \text{ else,}
		\end{cases} \\
		\llbracket \lozenge \lambda, C, t \rrbracket &=
		\begin{cases}
			\mathcal{C}_\lozenge(\lambda, C, t) & \text{if } t_\mathit{max}^{T^C} + t \leq 20\\
			\emptyset & \text{else,}
		\end{cases} \\
		\llbracket \lambda_1 \vee \lambda_2, C, t \rrbracket &= \llbracket \lambda_1, C, t \rrbracket \cup \llbracket \lambda_2, C, t \rrbracket \text{, and} \\
		\llbracket \lambda_1 \wedge \lambda_2, C, t \rrbracket &= \llbracket \lambda_1, C, t \rrbracket \cap \llbracket \lambda_2, C, t \rrbracket \text{,} \\
	\end{align*}
	with $\mathcal{C}_\lozenge(\lambda, C, t) \coloneq \{ C' \in \mathcal{C}^{[0,t_\mathit{max}^{T^C} + t]} \mid C \subseteq C' \text{ and } \exists t' \leq t\colon C'_{| \leq t_\mathit{max}^{T^C} + t'} \in \llbracket \lambda, C, t' \rrbracket \}$.
	Note that the first line constraints the logic to define only concrete scenarios with time domain $[0,20]$.
	Now, consider 
	$$\lambda_\mathit{ex} = (-50,100,10,-5) \wedge \lozenge ((150,0,10,-5) \vee (0,0,0,0))\text{.}$$
	Let us assume that we do not have any constraints in the world model, i.e.\ $A = \lambda_\mathit{ex}$.
	It is easy to see that the concrete scenario $C$ defined in \autoref{ex:concrete-scenario} is encompassed by $A$, i.e.\ $C \in \mathcal{C}(A)$.
	Moreover, any scenario where the actor comes to a stop at $(0,0)$ is also included.
\end{example}

\section{Comparison of Scenario Formalisms}
\label{sec:comparison}
% !TeX spellcheck = en_US
% !TeX root = ../main.tex

Based on the formalizations of Section \ref{sec:formalization}, we now conduct a comparison between logical and abstract scenarios regarding their properties and characteristic use cases observed in practice.
Specifically, we ask:
\begin{enumerate}
	\item Expressiveness (Section \ref{subsec:comparsion_expressiveness}): Can we express sets of concrete scenarios in one framework that is not describable in the other?
	\item Specification complexity (Section \ref{subsec:comparsion_specification_complexity}): How high are efforts for specifying scenarios in both frameworks?
	\item Sampling (Section \ref{subsec:comparison_sampling}): How can we generate concrete scenarios from a given logical resp. abstract scenario?
	\item Monitoring (Section \ref{subsec:comparison_monitoring_iia}): How can we decide whether a given concrete scenarios belongs to some logical resp. abstract scenario?
\end{enumerate}
Note that there are several other dimensions worthy of discussion, which, for brevity, are only briefly discussed in Section \ref{sec:discussion}.

\subsection{Expressiveness}
\label{subsec:comparsion_expressiveness}

We first consider expressiveness, which intuitively asks whether we can specify an equivalent logical scenario for any given abstract scenario, and vice versa.
We will see that the former holds, whereas the latter is, in our general framework, not possible.

Let us first examine the latter, for which we exploit that scenario logics encode trees instead of linear temporal evolutions (as dynamic systems do).
In short, there is an abstract scenario that can represent the set of all functions $f\colon \mathbb{R} \to \{0,1\}$ in its encoded tree, which is not possible using the real parameter space of logical scenarios.

\begin{theorem}
	\label{thm:expressiveness1}
	There is an abstract scenario $A$ for which there is no logical scenario $L$ s.t. $\mathcal{C}(A) = \mathit{Im}(L)$.
\end{theorem}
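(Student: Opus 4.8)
The plan is to argue by cardinality, exploiting that a scenario logic's branching semantics can realize strictly more concrete scenarios than a real parameter space can ever enumerate. Write $\mathfrak{c} = |\mathbb{R}|$. First I would note the cheap upper bound for logical scenarios: by \autoref{def:logical_scenario} any logical scenario is a map $L\colon X \to \mathcal{C}$ with $X \subseteq \mathbb{R}^n$, so $|\mathit{Im}(L)| \le |X| \le |\mathbb{R}^n| = \mathfrak{c}$. Hence it suffices to construct a single abstract scenario $A$ with $|\mathcal{C}(A)| > \mathfrak{c}$: since $2^{\mathfrak{c}} > \mathfrak{c}$ by Cantor's theorem, a target cardinality of $2^{\mathfrak{c}} = |\{0,1\}^{\mathbb{R}}|$ would give $\mathcal{C}(A) \neq \mathit{Im}(L)$ for \emph{every} logical scenario $L$ at once.

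The core of the argument is therefore the \textbf{construction} of an abstract scenario, in the sense of \autoref{def:abstract-scenario_new}, whose concrete-scenario set $\mathcal{C}(A)$ is in bijection with the set of all functions $f\colon \mathbb{R} \to \{0,1\}$. Exploiting that the denotational semantics $\llbracket \cdot, \cdot, \cdot \rrbracket$ of \autoref{def:scenario-logic} builds \emph{trees} of concrete scenarios, I would design a scenario logic $\Lambda$ over $T = [0,\infty)$ together with a formula $\lambda$ and trivial world model whose maximal branches are indexed by such functions $f$, setting $A = (\lambda)$ so that $\mathcal{C}(A)$ equals this family of branches. A successful design yields an injection $\{0,1\}^{\mathbb{R}} \hookrightarrow \mathcal{C}(A)$ and thus $|\mathcal{C}(A)| \ge 2^{\mathfrak{c}}$.

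The step I expect to be the main obstacle is reconciling this with the \emph{piecewise continuity} required of every concrete scenario in \autoref{def:concrete_scenario} and again in \autoref{def:scenario-logic}. The tempting encoding $C_f(t) = f(t)$ fails, since a generic $f$ has uncountably many discontinuities; more fundamentally, a continuous -- hence piecewise continuous -- map out of a connected time domain is determined by its values on a countable dense subset, so a scene space of cardinality $\mathfrak{c}$ (e.g.\ any $\mathbb{R}^m$) admits only $\mathfrak{c}$ concrete scenarios and can never host $2^{\mathfrak{c}}$ branches. The resolution is to store the \emph{entire} function $f$ in a component of the scene that is held constant in time rather than spreading its bits across the time axis: each branch $C_f$ is then constant in that component and trivially continuous, while distinct $f$ give distinct branches. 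This forces a scene space $\mathcal{S}$ with $|\mathcal{S}| \ge 2^{\mathfrak{c}}$ -- admissible under \autoref{remark:scenes_continuity}, since any set carries the discrete metric -- and this surplus of scenes is exactly the leverage the abstract formalism has and the logical formalism lacks.

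It then remains to discharge the routine verification that the constructed $\llbracket \cdot, \cdot, \cdot \rrbracket$ meets the axioms of \autoref{def:scenario-logic}: the identity $\llbracket \lambda, C, 0 \rrbracket = \{C\}$, the composition law over $t_1 + t_2$, the prefix condition $C \subseteq C'$, closure under $\wedge$, and piecewise continuity of every denoted scenario. With these in place the conclusion is immediate: for every logical scenario $L$ we have $|\mathit{Im}(L)| \le \mathfrak{c} < 2^{\mathfrak{c}} \le |\mathcal{C}(A)|$, so $\mathcal{C}(A) \neq \mathit{Im}(L)$, which proves \autoref{thm:expressiveness1}.
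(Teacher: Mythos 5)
Your proposal follows the same route as the paper's own proof sketch: bound $|\mathit{Im}(L)| \le |X| \le \mathfrak{c} = \beth_1$ for every logical scenario, and exhibit an abstract scenario whose branching semantics realizes $\beth_2 = |2^{\mathbb{R}}|$ many concrete scenarios, so that Cantor's theorem separates the two. Where you go beyond the paper is in taking the piecewise-continuity requirement of \autoref{def:concrete_scenario} and \autoref{def:scenario-logic} seriously, and this is a genuine improvement: as you observe, a piecewise continuous map from a connected $T \subseteq \mathbb{R}$ into any metric space of cardinality $\mathfrak{c}$ (such as $\mathbb{R}^m$) is determined by countably many values, so over such a scene space there exist only $\mathfrak{c}$ concrete scenarios in total, and the paper's ``branch at each time into two children'' construction cannot literally yield $\beth_2$ \emph{distinct} concrete scenarios -- the paths of the tree would have to collapse or fail piecewise continuity. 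Your fix -- storing the entire function $f \colon \mathbb{R} \to \{0,1\}$ in a time-constant scene component, over a scene space $\mathcal{S}$ of cardinality at least $2^{\mathfrak{c}}$ equipped with the discrete metric (admissible under \autoref{remark:scenes_continuity}) -- makes every branch a constant, hence continuous, trajectory while keeping distinct $f$ on distinct branches, and the remaining axiom checks of \autoref{def:scenario-logic} are indeed routine for this construction. The one caveat worth stating explicitly is that the theorem, so repaired, hinges on permitting scene spaces of cardinality above $\mathfrak{c}$; the paper's informal follow-up remark (that the gap is ``likely not encountered in practice'') is thus even stronger than the authors suggest, since with any practically occurring scene space such as $\mathbb{R}^m$ the cardinality argument is unavailable altogether.
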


\begin{proof}[Sketch]
	Consider a scenario logic over a time domain $T \subseteq \mathbb{R}$ that has a bijection into $\mathbb{R}$ which allows defining an abstract scenario $A$ that branches at each time into two children.
	A single path through this tree can be described by a function $f\colon \mathbb{R} \to \{0,1\}$, and the set of all paths is equivalent to the set of all such functions $f$, which has a cardinality equivalent to $|2^\mathbb{R}|$ (which is the beth number $\beth_2$).
	But, logical scenarios are restricted to a finite number of real intervals, for each of which a bijection to $\mathbb{R}$ can be given. Hence, scenario sets of cardinality at most $\beth_1$ can be represented using finitely many logical scenarios.
	The difference in expressiveness follows from the difference between $\beth_1$ and $\beth_2$, as there is no bijection between $\mathbb{R}$ and $2^\mathbb{R}$. \qed
\end{proof}

If we consider only discrete time, the above argument does not hold:
Here, each tree allows at most to encode paths of the form $f\colon \mathbb{N} \mapsto \mathbb{R}$ (if we allow infinite branching into the reals at each time point), which, however, only enables representing sets of cardinality $|\mathbb{R}^\mathbb{N}|$, which is equal to $\beth_1$.
Note that for logical scenarios such sets can always be encoded in the parameter space $X$, cf.~equation \eqref{eq:logical_scenario}.

The second part of our examination of expressiveness is concerned with the other direction: 
showing that any logical scenario can be encoded in an abstract scenario of a suitable scenario logic.
\begin{theorem}
	\label{thm:expressiveness2}
	Let $\mathcal{M}$ be the set of deterministic models, $S_0$ a starting scene and $X$ a parameter space.
	Then, there exists a scenario logic $\Lambda$ s.t. for any logical scenario $L_\Phi$ with $\Phi \in \mathcal{M}$ over $S_0$ and $X$ there is an abstract scenario $\lambda_\Phi \in \Lambda$ with $\mathit{Im}(L_{\Phi}) = \mathcal{C}(\lambda_\Phi)$.
\end{theorem}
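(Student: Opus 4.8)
The plan is to build a single scenario logic $\Lambda$ whose formulas directly \emph{replay} the concrete scenarios produced by logical scenarios, and then, for each $L_\Phi$, to exhibit the formula whose accepted set is exactly $\mathit{Im}(L_\Phi)$. Since the statement fixes $S_0$ and $X$ and quantifies $\exists\Lambda\,\forall L_\Phi$, I would work in the standing finite-horizon convention of the paper (a fixed time domain $T=[0,c]$, so that every $\alpha_{S_0(x)}^{\Phi(x)}$ has full length $c$) and let $\Lambda$ contain one atomic formula $\lambda_G$ for \emph{every} set $G\subseteq\mathcal{C}$ arising as the image of some logical scenario over $S_0$ and $X$, closed under the mandatory operation $\wedge$. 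For the target $L_\Phi$ I then take $\lambda_\Phi := \lambda_{\mathit{Im}(L_\Phi)}$ (as an abstract scenario with empty world model, so $\mathcal{C}(\lambda_\Phi)=\mathcal{C}(\lambda_{\mathit{Im}(L_\Phi)})$), which resolves the quantifier order uniformly.

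Next I would fix the denotational semantics. Writing $s$ for the length of the domain $[0,s]$ of a concrete scenario $C$, I set $\llbracket\lambda_G,C,0\rrbracket=\{C\}$ and, for $t>0$,
\[
  \llbracket\lambda_G,C,t\rrbracket=\bigl\{\,g_{|\le\min(s+t,c)}\ \bigm|\ g\in G,\ C\subseteq g\,\bigr\},
\]
i.e.\ from the prefix $C$ we continue along every generated scenario of which $C$ is a prefix, clamping the length at the common horizon $c$. Intuitively this encodes the tree whose branches are the elements of $G$, with self-loops installed at the leaves of length $c$. The axioms of Definition~\ref{def:scenario-logic} are then checked mechanically: identity at $0$ holds by definition; the prefix property holds because $\min(s+t,c)\ge s$; and the semigroup/flow property reduces to the identity $\min(\min(s+t_1,c)+t_2,c)=\min(s+t_1+t_2,c)$ together with the fact that each generated scenario extending $C$ factors through its own prefix at time $t_1$. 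Piecewise continuity is inherited from the deterministic models via Definition~\ref{def:det-model}, since restrictions of piecewise continuous maps stay piecewise continuous.

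The crucial structural fact is length-uniformity: all elements of $\llbracket\lambda_G,C,t\rrbracket$ share the length $\min(s+t,c)$. This is what keeps the logic closed under $\wedge$ without leaving the class, since any $C'$ lying in $\llbracket\lambda_{G_1},C_1,t_2\rrbracket\cap\llbracket\lambda_{G_2},C_2,t_2\rrbracket$ has $C_1$ and $C_2$ as prefixes of equal length, forcing $C_1=C_2$, and this collapse is exactly what pushes the intersection through the flow axiom; hence the intersection semantics again satisfies Definition~\ref{def:scenario-logic} and $\Lambda$ is a bona fide scenario logic. It remains to compute $\mathcal{C}(\lambda_G)$ through its first (bounded) clause. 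If $g\in G$ then $g$ has full length $c$, so for every $t$ the only generated scenario extending $g$ is $g$ itself and $\llbracket\lambda_G,g,t\rrbracket=\{g\}$, whence $g\in\mathcal{C}(\lambda_G)$. Conversely, if $\llbracket\lambda_G,C,t\rrbracket=\{C\}$ for all $t$, nonemptiness forces $C\subseteq g$ for some $g\in G$, while a proper prefix ($s<c$) would be strictly extended for small $t>0$, contradicting $\llbracket\lambda_G,C,t\rrbracket=\{C\}$; so $s=c$ and $C=g\in G$. This yields $\mathcal{C}(\lambda_{\mathit{Im}(L_\Phi)})=\mathit{Im}(L_\Phi)$.

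I expect the main obstacle to be the acceptance condition of Definition~\ref{def:scenario-logic}, which is a \emph{fixed-point} condition ($\llbracket\lambda,C,t\rrbracket=\{C\}$) rather than plain membership; getting exactly $\mathit{Im}(L_\Phi)$ hinges on engineering the leaf self-loops (the clamping at $c$) so that full-length generated scenarios are the \emph{only} fixed points, while proper prefixes and inconsistent scenarios are rejected. The genuinely delicate point is the \emph{unbounded} time domain $T=[0,\infty)$: there acceptance is governed by the second clause of $\mathcal{C}(\lambda)$, which only demands that arbitrarily long prefixes of $C$ be consistent with $G$, so $\mathcal{C}(\lambda_G)$ could strictly exceed $G$ by its prefix-limit points. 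I would handle this either by staying in the paper's fixed finite-horizon convention, where the argument above applies verbatim, or by additionally assuming the image is closed under such prefix-limits; pinning down this limit-closure is where I anticipate the real work.
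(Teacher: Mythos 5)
Your proposal is correct and follows essentially the same route as the paper's own proof sketch: a replay logic whose formulas denote exactly the image of the logical scenario (nondeterministic choice of parameter at time zero, deterministic prefix-extension afterwards, with full-length scenarios as the unique fixed points of the bounded acceptance clause), restricted to closed, bounded time domains while deferring the unbounded and open cases just as the paper's footnote does. The only differences are presentational --- you index formulas extensionally by image sets $\lambda_G$ where the paper sets $\Lambda \coloneq \mathcal{M}$ and indexes by models $\lambda_\Phi$, and you verify the scenario-logic axioms (identity, flow, prefix, $\wedge$-closure) more explicitly than the paper's sketch.
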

The core idea is to encode the choice of parameter nondeterministically in the first layer of the tree spanned by the scenario logic.
After choosing the starting scene, the scenario logic simply replicates the behavior of the dynamic model (with some special behavior to respect its accepting condition defined by $\mathcal{C}(\lambda)$).
\begin{proof}[Sketch]
	For the sketch, we assume that for each $\Phi \in \mathcal{M}$ and $x \in X$, we have a closed time domain\footnote{We have to consider the unbounded and open cases too, but omit both for brevity.} $T(\Phi(x), S_0(x)) = [0,t_\mathit{max}^{\Phi,x}]$ and set $T = [0, \max_{\Phi \in \mathcal{M}, x \in X} t_\mathit{max}^{\Phi,x}]$.
	Let $\Lambda \coloneq \mathcal{M}$ and $\lambda_\Phi \in \Lambda$.
	Then, the semantics of $\Lambda$ for $C \in \mathcal{C}$, $t \in T$, and $x \in X$ is defined as
	\begin{equation*}
		\llbracket \lambda_\Phi, C, t \rrbracket \coloneq
		\begin{cases}
			\{\alpha_{S_0(x) \mid \leq t'+t}^{\Phi(x)}\} & \text{if } C = \alpha_{S_0(x) \mid \leq t'}^{\Phi(x)} \text{ and } t+t' \leq t_\mathit{max}^{\Phi,x}\\
			\{\alpha_{S_0(x)}^{\Phi(x)}\} & \text{if } C = \alpha_{S_0(x)}^{\Phi(x)}\\
			\emptyset & \text{else}
		\end{cases}
	\end{equation*}
	where $\alpha_{S_0(x) \mid \leq t}^{\Phi(x)}$ denotes $\alpha_{S_0(x)}^{\Phi(x)}$ restricted to $[0,t]$.
	Note that $\alpha_{S_0(x) \mid \leq 0}^{\Phi(x)} = S_0(x)$.
	For any logical scenario $L_\Phi$ with $\Phi \in \mathcal{M}$ and start scene $S_0$ it holds that $\mathit{Im}(L_\Phi) = \mathcal{C}(\lambda_\Phi)$.
	To show this, let $\alpha_{S_0(x)}^{\Phi(x)} \in \mathit{Im}(L_\Phi)$.
	By construction of $\llbracket \cdot, \cdot, \cdot \rrbracket$, for any $\alpha_{S_0(x)}^{\Phi(x)}$ it holds that $\llbracket \lambda_\Phi, \alpha_{S_0(x)}^{\Phi(x)}, t \rrbracket = \{\alpha_{S_0(x)}^{\Phi(x)}\}$ and, by \autoref{def:scenario-logic} of $\mathcal{C}(\lambda_\Phi)$, $\alpha_{S_0(x)}^{\Phi(x)} \in \mathcal{C}(\lambda_\Phi)$.
	Moreover, since we consider only bounded concrete scenarios, $\mathcal{C}(\lambda_\Phi) = \{ C \in \mathcal{C} \mid \forall t \in T\colon \llbracket \lambda_\Phi, C, t \rrbracket = \{ C \} \}$.
	By this and as $\llbracket \lambda_\Phi, C, t \rrbracket = \{ C \}$ can only be satisfied if $C$ has the form $\alpha_{S_0(x)}^{\Phi(x)}$, % hier kann man noch was schreiben wenn man will
	it follows that $C \in \mathit{Im}(L_\Phi)$.
\end{proof}

The construction of the scenario logic in \autoref{thm:expressiveness2} illustrates the tree structure of abstract scenarios well, as visualized in \autoref{fig:logic_tree}.
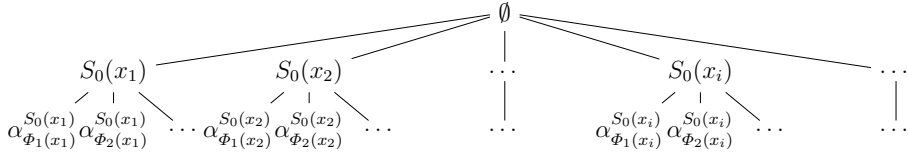
\begin{figure}[htb!]
	\centering
	\vspace{-0.75cm}
	\scalebox{0.78}{
		% !TeX spellcheck = en_US

\begin{tikzpicture}[
	level distance=2cm,level 1/.style={sibling distance=3.3cm},
	level 2/.style={sibling distance=1.2cm}, 
	level 3/.style={sibling distance=2.5cm}, 
	level distance=1.0cm,]
	\begin{scope}[scale=1]
		
	%\node {$\Lambda_{\mathcal{M}}$}
%		child {node {$S_0(x_1)$}}%
%			child {node {$\alpha_{\Phi^{x_1}}^{S_0^{x_1}}$}}
%	child {node {$S_0(x_2)$} edge from parent}
%	child {node {$S_0(x_3)$} edge from parent}
%	child {node {$\dots$} edge from parent}
%	child {node {$S_0(x_i)$} edge from parent}
%	child {node {$\dots$} edge from parent}
%	child {node {$S_0(x_1)$} edge from parent};
%	grandchild {node [xshift = -1cm] {grandchild 1}}
%	grandchild {node [yshift = .5cm] {grandchild 2}}
¸	%sibling distance=10em,
		%every node/.style = {shape=rectangle, rounded corners,
	%		draw, align=center,
	%		top color=white, bottom color=blue!20}]]
	
		\node { {$\emptyset$} }
		child { node {$S_0(x_1)$}
		child { node {$\alpha_{\Phi_1(x_1)}^{S_0(x_1)}$}}
		child { node {$\alpha_{\Phi_2(x_1)}^{S_0(x_1)}$}}
		child { node {$\dots$} } }
		child { node {$S_0(x_2)$}
		child { node {$\alpha_{\Phi_1(x_2)}^{S_0(x_2)}$}}
		child { node {$\alpha_{\Phi_2(x_2)}^{S_0(x_2)}$}}
		child { node {$\dots$} }}
		child { node {$\dots$}
		child { node {$\dots$} } }
		child { node {$S_0(x_i)$}
		child { node {$\alpha_{\Phi_1(x_i)}^{S_0(x_i)}$}}
		child { node {$\alpha_{\Phi_2(x_i)}^{S_0(x_i)}$}}
		child { node {$\dots$} }}
		child { node {$\dots$} 
		child { node {$\dots$} }} ;¸
	%		child { node {relation sign} }
%			child { node {several places} }
%			child { node {center} } }
%		child { node {first left,\\centered,\\last right} } };

\end{scope}
	\end{tikzpicture}
	}
	\vspace{-0.5cm}
	\caption{Tree structure of the logic from \autoref{thm:expressiveness2} encoding logical scenarios.}
	\label{fig:logic_tree}
\end{figure}

From \autoref{thm:expressiveness1} and \autoref{thm:expressiveness2} it follows that abstract scenarios are more expressive than logical scenarios, i.e., they subsume logical scenarios and are able to express more properties.
Note, however, that the latter is a rather theoretical consideration based on cardinalities likely not encountered in practice.
It is probable that for all practical purposes, both formalisms can be considered equally expressive. 
However, as we will show in the following, abstract scenarios might deliver a more compact representation since they allow representing trees instead of sets of linearly evolving concrete scenarios, which can, in practice, reduce specification efforts drastically.

\subsection{Specification Complexity}
\label{subsec:comparsion_specification_complexity}

Specification complexity is the effort required to specify scenarios that represent certain sets of concrete scenarios.
Here, we find that a tree structure allows us to compress large sets of concrete scenarios, which logical scenarios must explicitly encode as parameters.

\begin{theorem}
	\label{thm:specification_complexity}
	Abstract scenarios can have a lower specification complexity than logical scenarios, i.e., there is a scenario logic with a formula of constant size that defines $2^n$ many concrete scenarios of length $n \in \mathbb{N}$ s.t. there exist only logical scenarios $L$ of size $\mathcal{O}(2^n)$ that satisfy $\text{Im}(L) = \mathcal{C}(A)$, where the size of $L$ is measured by the size of its parameter space and its number of models.
\end{theorem}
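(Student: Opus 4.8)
The plan is to exhibit a single constant-size formula whose branching semantics unfold into an exponentially large, \emph{genuinely branching} tree, and then to argue that the forward determinism built into \autoref{def:det-model} forbids any deterministic model from reproducing such branching, so that a logical scenario must spend one parameter configuration (or one model) per leaf.

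\textbf{Construction.} First I would work in discrete time $\{0,\dots,n-1\}$ and fix two distinct scenes $p_0 \neq p_1 \in \mathcal{S}$. Equipping the scenario logic with an always-style operator $\Box$ whose branching semantics (in the spirit of \autoref{def:scenario-logic}) let each step choose between $p_0$ and $p_1$, I would take the abstract scenario $A$ given by the constant-size formula $\lambda = \Box(p_0 \vee p_1)$ with empty world model. Unfolding $\llbracket\cdot,\cdot,\cdot\rrbracket$ step by step shows $\mathcal{C}(A) = \{C_w \mid w \in \{0,1\}^n\}$ with $C_w(t) = p_{w_t}$, i.e.\ exactly the $2^n$ binary-valued concrete scenarios of length $n$; these are pairwise distinct maps into the two-element scene set $\{p_0,p_1\}$. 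The formula size is independent of $n$, so the specification cost on the abstract side is constant while the represented set has cardinality $2^n$.

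\textbf{Lower bound via determinism.} The crux is to show that any logical scenario $L$ with $\mathrm{Im}(L) = \mathcal{C}(A)$ has size $\Omega(2^n)$. Here I would exploit that each $C_w$ only ever takes values in $\{p_0,p_1\}$: since $C_w(0) = p_{w_0}$ is the starting scene, every realizing attribute-level concrete scenario must start in $\{p_0,p_1\}$, and as all intermediate values lie in $\{p_0,p_1\}$ as well, the reachable part of every contributing model is a deterministic dynamical system on the two-element set $\{p_0,p_1\}$. The semigroup identity of Eq.\eqref{eq:deterministic_model} then forces forward determinism: if two trajectories of the \emph{same} model agree at some time $t$, they agree at every later time. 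Consequently a fixed model, whose starting scene ranges over only two values, contributes at most two distinct elements to $\mathrm{Im}(L)$, whereas $\mathcal{C}(A)$ contains pairs $C_w, C_{w'}$ that agree up to some step and then diverge, which no single model can produce jointly. Counting then yields that the number of models must be $\Omega(2^n)$; dually, if the model is held fixed and the parameter $x$ is varied (the $L_\Phi$ reading of \autoref{def:logical_scenario}, cf.\ Eq.\eqref{eq:logical_scenario}), the injectivity forced by determinism means $X$ must contain $\Omega(2^n)$ configurations mapping to the $2^n$ isolated points of $\mathcal{C}(A)$. Either way the size measure of \autoref{thm:specification_complexity} is $\Omega(2^n)$, while the trivial enumeration shows $\mathcal{O}(2^n)$ suffices.

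\textbf{Main obstacle.} The delicate point is ruling out a cheap, \emph{continuous} parameterization: a low-dimensional interval already has uncountably many points, so a naive cardinality count does not block indexing $2^n$ trajectories by a single real parameter. I expect the real work to be in pinning down the size measure so that this is excluded --- specifically, in using that $\mathcal{C}(A)$ is a discrete (pairwise $d$-separated, cf.\ \autoref{def:det-model}) finite set together with the piecewise continuity of deterministic models to conclude that the parameter space must decompose into $\Omega(2^n)$ pieces, each producing one leaf. Making the branching-versus-determinism dichotomy quantitative, rather than merely qualitative, is where I would concentrate the argument.
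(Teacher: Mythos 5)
Your proposal follows essentially the same route as the paper's own proof sketch: a constant-size formula whose branching semantics unfold into the full binary tree of depth $n$ (the paper uses $\mathsf{true}$ with $\llbracket \mathsf{true}, C, 1 \rrbracket = \{C \oplus 0, C \oplus 1\}$, you use $\Box(p_0 \vee p_1)$), followed by the observation that deterministic models yield exactly one trajectory per parameter/starting scene, so that matching $\mathrm{Im}(L) = \mathcal{C}(A)$ forces $2^n$ models or $2^n$ parameter configurations. The continuous-parameterization loophole you flag as the main obstacle is likewise left unresolved in the paper's sketch, which simply asserts that the model ``must be explicitly informed about its branching choice at each tree node, requiring $n$ bits.''
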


\begin{proof}[Sketch]
	Consider a scenario logic over finite, discrete scenarios of length $n \in \mathbb{N}$ that branches each concrete scenario $C$ into two successor scenarios, i.e., $\llbracket \mathsf{true}, C, 1 \rrbracket = \{ C \oplus 0, C \oplus 1 \}$ if $|C| \leq n$ else $\{C\}$, where $C \oplus n$ denotes the concatenation of $n$ to $C$.
	Essentially, the only formula $\mathsf{true}$ of the scenario logic encodes exponentially many models, i.e., $|\mathcal{C}(\mathsf{true})| = 2^{n}$.
	Since any deterministic model can only represent a unique $C \in \mathcal{C}(\mathsf{true})$, we either require $2^{n}$ models or using a parameterized model $\varphi_k$ for a parameter $k \in 2^{n}$.
	The latter holds as the deterministic model must be explicitly informed about its branching choice at each tree node, requiring $n$ bits.
	In both cases, the number of models or the parameter space is exponential in $n$.
\end{proof}

While these considerations are rather theoretical, we provide a detailed example of an abstract scenario specifying a factorially sized set of concrete scenarios in the \hyperref[appendix]{Appendix}.
As shown in Section \ref{subsec:comparison_sampling}, this possible advantage in specification complexity can come at the expense of higher efforts in other applications, such as sampling.

\subsection{Sampling}
\label{subsec:comparison_sampling}
Now we compare the capabilities and complexities of the two scenario formalisms regarding the use case \emph{play-out} consisting of the stages \emph{sampling} and \emph{execution}.
We can describe play-out as:
\begin{itemize}
       \item Given a logical scenario $L_{\Phi}$ with $\Phi \in [\mathcal{M}]^{<\omega}$ find a concrete scenario $C$ with $C \in Im(L_{\Phi})$, or
       \item given an abstract scenario $A$ find a concrete scenario $C \in \mathcal{C}(A)$.
\end{itemize}
Note that it is often required to compute a large number of concrete scenarios from a given logical or abstract scenario and not just one.

\subsubsection{Logical Scenarios}
\label{subsec:sampling_logical}

Sampling a logical scenario $L_{\Phi}$ is rather straight forward by definition.
To ensure measurability, we assume a fixed set of models $\Phi \in [\mathcal{M}]^{<\omega}$.
For an arbitrary probability space $(\Omega, \mathcal{F}, P)$, we define a random variable $\zeta : \Omega \rightarrow X$, where $X$ is the parameter space of the logical scenario $L_\Phi$.
Assuming the measurability of $L_\Phi$, we can concatenate the two concepts, creating a random function
\begin{align*}
	L_\Phi \circ \zeta \colon (\Omega, \mathcal{F}, P) \rightarrow (\mathcal{C},\mathcal{B}(\mathcal{C})),\,
	\omega \mapsto L_\Phi(\zeta(\omega)) = \alpha_{S_0(\zeta(\omega))}^{\Phi(\zeta(\omega))}\,.
\end{align*}
Thereby, each realization of the random function $L_\Phi \circ \zeta$ is a concrete scenario and the distribution of the concrete scenarios is the push-forward of $P$ under $L_\Phi \circ \zeta$, i.e. $(L_{\Phi} \circ \zeta)_\ast(P)(B) = P((L_{\Phi} \circ \zeta)^{-1}(B)), B \in \mathcal{B}(\mathcal{C})$.
When discussing logical scenarios, however, one is usually more interested in the distribution on the parameter space, which is given in this notation as $\zeta_\ast(P)(B)$ for $B \in \mathcal{B}(X)$. 

\emph{Parameter distribution of a logical scenario} now refers to fixating the distribution $P$ under the push-forward.
This indicates that the initial probability space $(\Omega,\mathcal{F},P)$ is not as relevant as long as the push-forward of $P$ coincides with probability distribution chosen on the parameter space of the logical scenario.
Through this process, we may define parameter distributions for arbitrary parameter spaces, independent of the logical scenario they may belong to in practice.
Afterwards, sampling from the logical scenario simplifies to evaluating the random variable $\zeta$ based on the induced distribution and evaluating the function $L_\Phi$ based on the obtained parameter values.

\begin{comment}
\CN{I think the following paragraph is a candidate for shortening}
In practice, for example when considering logical scenarios defined in the OSC XML V1.3 format, there is a similar distinction between the parameter sampling description and the logical scenario itself.
Here, a parameter declaration file is used, allowing to specify univariate and multivariate distributions that can be either discrete or continuously distributed.
Several distribution declaration presets are available, while the actual implementation of the sampling process is left open to the user.
The sampled parameters are then placed into the logical scenario description file, that is referenced in the scenario parameter specification.
Usually, this process occurs by generating several concrete OpenSCENARIO XML 1.3 scenario description files where the variables have been replaced with the sampled values.
Afterwards, the generated OpenSCENARIO files may be simulated, thereby introducing concrete models ($\Phi \in [\mathcal{M}]^{<\omega}$) to the scenario and -- in the words of this work -- turning a pre-execution scenario into a post-execution scenario.
OpenSCENARIO 1.3 further allows the iteration over sets of a finite amounts of discrete parameters. 
In the formalization presented in this work, this would be represented as a family $(\zeta_d)_{d=1}^D$, $D \in \mathbb{N}$, where each $\zeta_d$ is a random variable as defined before and $d$ is chosen based on the selected iteration pattern.
\end{comment}

\subsubsection{Abstract Scenarios}

Sampling from an abstract scenario $A$ is the task of finding $C \in \mathcal{C}(A)$.
This is done by solving the constraint system associated with $A$, commonly known as '(finite) model finding', which may be undecidable for some scenario logics (cf. MLSL).
Thus, to enable a meaningful sampling process, abstract scenarios are usually defined using a decidable (fragment of a) logic.
%For example, SMT solvers typically generate a model alongside their satisfiability result.
An example are SMT solver, which are used for TSCs \cite{becker2022simulation}, BTC ES Traffic Phases \cite{eggers2018constraint}, and most likely for M-SDL, as indicated by a patent application \cite{pidan2022techniques}.
An orthogonal approach is using rejection sampling from probability distributions, as done by Scenic \cite{fremont2019scenic}, however, completeness can not be guaranteed (drawn samples might always violate the scene constraints).
For OpenSCENARIO DSL, sampling techniques are left to the tool vendor, whose implementations are yet to emerge.

\subsubsection{Comparing Sampling Effort}
As practical applications confirm, the process of sampling from $\zeta$ is rather simple in most cases, where it usually only takes seconds to generate thousands of pre-execution concrete scenarios. 
However, when comparing the sampled output of logical and abstract scenarios, one has to assume a comparable level of concretization.
Since abstract scenarios skip pre-execution concrete scenarios, one has to also consider the time spent on execution (which, however, can be parallelized).
% note from Lukas: moved to (...) in sentence above
%It should be noted that this process of executing pre-execution concrete scenarios, together with models, can often be easily scheduled in parallel, e.g. in a simulation.
% note from Lukas: uncommented because I do not know how true the statement actually is (more time -> solving was pretty fast (better than realtime, even) - but obviously, limited language.)
%From experience \cite{becker2022simulation,borchers2024tsc}, abstract scenarios tend to require more time to generate samples, however, they thereby already include some execution with models.
Sampling abstract scenarios is heavily influenced by the expressiveness of the scenario logic, which limits the representable models and available solver technology.

\subsection{Monitoring}
\label{subsec:comparison_monitoring_iia}

\begin{comment}
IIA: For a given concrete scenario $C$ find $x \in A/L$ such that $A(x) = C/ L(x)=C$.

\LW{I think IAA makes sense only for logical scenarios, not for abstract scenarios. I cannot think of a good translation of the above into the world of abstract scenarios. Maybe finding a formula given a set of concrete scenarios (so formula learning?). But this is very different from IAA. Might not be a good property / use case for comparing the two.}

When $\mathcal{S}_C \ne \mathcal{S_A}$: IIA not feasible. Monitoring feasible.
\\
Otherwise: Free variables in $A$ can be evaluated on $C$, resulting in True/False. Then Monitoring=IAA.
Let $C$ be a concrete scenario and $A = (\lambda,w_i)$ an abstract. For IIA we want to find $x,x_1,\dots,x_n$ such that 
\end{comment}

\subsubsection{Logical Scenarios}
\label{subsubsec:comparison_monitoring_iia_logical}

As we have presented the logical scenarios in Section \ref{subsec:construction_logical_scenarios}, there are essentially two cases to consider.
Let $L_\mathcal{M}$ be a logical scenario.
We are concerned with the question
\begin{itemize}
	\item[1)] whether there are deterministic models $\Phi \in [\mathcal{M}]^{<\omega}$ and $x \in X$ s.t. for $C \in \mathcal{C}$ we have $C = L_\Phi(x)$ and thus $x \in L_\Phi^{-1}(C)$, or
	\item[2)] whether for a given model $\Phi \in [\mathcal{M}]^{<\omega}$ we can find $x \in X$ such that $C = L_\Phi(x)$ and thus $x \in L_\Phi^{-1}(C)$.
\end{itemize}
In both of these cases we can ask for the existence of the quantities (monitoring) as well as their actual value (inverse image analysis).

In practice, the first option combined with the inverse image analysis use case would be the one of enabling the re-simulation of observed trajectories with possible intervention based on the actual model dynamics.
For the monitoring use case, it would answer the question of scenario classification, whereby, however, due to the imperative nature of logical scenarios we may not classify by specific situations occuring in the scenario, but more so by starting conditions and whether they conform to a certain physical reality.

The interpretation of the second option would be similar, yet limited to a specific model, e.g. are the models we are currently using able to produce a certain concrete scenario (monitoring) and if so, with which parameters $x \in X$ may we reproduce it.

For example, consider scenes that are limited to the $x$-position of the ego vehicle.
The concrete scenario is the map $C \colon [0, 10] \rightarrow [0, 20], t \mapsto 2t$.
The logical scenario contains a single model $\varphi(x) \colon t \rightarrow xt$, a starting scene $S_0 = 0$ and the parameter space $X = [1, 3]$.
In this simple example one may directly tell that $C = L_\varphi(2)$ with $2 \in X$, thus we indeed have $C \in Im(L_\varphi)$.
However, in cases where the model is not known, the solution to this problem may become as complex as solving a non-linear partial differential equation.

\subsubsection{Abstract Scenarios}
\label{subsubsec:comparison_monitoring_iia_abstract}

Since abstract scenarios are inherently logic-based, they lend themselves easily to monitoring, a problem that has been well-established in the runtime verification community \cite{leucker2009brief}.
For brevity, we only briefly recapitulate the fundamentals of monitoring in temporal logics in the terminology employed in this work.

In its simplest form, the monitoring problems asks, for a given bounded concrete scenario $C$ and an abstract scenario $A$, whether $C \in \mathcal{C}(A)$ (also called the word problem).
Under the mild assumption that a single concrete scenario can be encoded by a formula, we can reduce this problem to checking satisfiability of abstract scenarios: $C \in \mathcal{C}(\lambda)$ iff $\mathcal{C}(\lambda_C) \subseteq \mathcal{C}(\lambda)$ iff $\mathcal{C}(\lambda_C \wedge \lambda) \neq \emptyset$, where $\lambda_C$ is defined suitably s.t. $\mathcal{C}(\lambda_C) = \{ C \}$. 
However, in many settings, the word problem is computationally less complex than satisfiability. 
For example, solving the word problem in LTL over finite words can be done in AC\textsuperscript{2} \cite{kuhtz2009ltl}, whereas satisfiability is PSpace-complete \cite{de2013linear}.

Note that above definition assumes bounded concrete scenarios in $\mathcal{C}(A)$. 
If the scenario logic operates on infinite words, the monitoring problem can be re-defined as asking whether for all infinite extensions $C'$ of $C$ it can be guaranteed that $C' \in \mathcal{C}(A)$.
This prefix problem can be computationally hard, e.g.\ PSpace-complete for LTL over infinite words, which is as hard as checking satisfiability \cite{bauer2013propositional}.

Besides the word and prefix problems, monitoring can be extended to monitors returning true, false, or unknown, depending on whether it is guaranteed that any extension $C'$ of $C$ must be in $\mathcal{C}(A)$, can never be in in $\mathcal{C}(A)$, or neither can be ensured, respectively.

\section{A Practioner's Summary}
\label{sec:discussion}
% !TeX spellcheck = en_US
% !TeX root = ../main.tex

% TODO backrefs

It remains to transfer above discussions on the properties and use cases of logical and abstract scenarios to practice.
In the end, the scenario-based practitioner is tasked with selecting a suitable formalism.
For this, we briefly summarize our findings in \autoref{tab:scenario_formalisms_characteristics}.

\begin{table}[htb!]
	\centering
	\caption{Characteristics and applications of the considered scenario formalisms: logical and abstract scenarios. \emph{Italicized} properties were discussed in Section \ref{sec:comparison}.}
	\label{tab:scenario_formalisms_characteristics}
	\renewcommand{\arraystretch}{1.2}
	\begin{tabular}{L{4cm}L{4cm}L{4cm}}
		\toprule
		\textbf{Characteristics} & \textbf{Logical Scenarios} & \textbf{Abstract Scenarios} \\
		\midrule
		\emph{Approach} & Imperative (Command-based) & Declarative (Constraint-based) \\
		\emph{Expressiveness} & Theoretically Lower; Practically Equivalent & Theoretically Higher; Practically Equivalent\\
		\emph{Specification Complexity} & Higher & Lower \\
		Description of Degrees of Freedom & Explicit & Implicit \\
		Dimensionality & Finite & Unbounded \\
		\midrule 
		\textbf{Applications} &  \textbf{Logical Scenarios} & \textbf{Abstract Scenarios}  \\
		\midrule
		Specifying & Data-based & Expert-based \\
		\emph{Sampling} & Easier (evaluation of parameter ranges) & Harder (sampling temporal formulae) \\
		\emph{Monitoring} & Unsuitable & Suitable \\
		Reasoning & Unsuitable & Suitable \\
		\bottomrule
	\end{tabular}
\end{table}
We highlight again that the main difference between abstract and logical scenarios is their declarative and imperative approach, which becomes visible in our formal framework of Section \ref{sec:formalization}.
This translates to theoretical differences of expressiveness (Section \ref{subsec:comparsion_expressiveness}), which we found, to all likelihood, to be of little practical relevance.
However, abstract scenarios can reduce efforts during specification (Section \ref{subsec:comparsion_specification_complexity}), as we do not require to explicate all possible options but rather encode them in a constraint.

Two characteristics were not discussed in Section \ref{sec:comparison}.
The first is the introduction of degrees of freedom to concrete scenarios, which, for both formalisms, is fundamentally influenced by their imperative and declarative natures.
For logical scenarios, one has to explicitly formulate the degrees of freedom as parameters.
With abstract scenarios, those are kept implicit -- any variable of the world model might be a degree of freedom, depending on whether the formula fixes the variable.
Second, in practical settings, the dimension of the space of all scenes $\mathcal{S}$ is finite for logical scenarios -- e.g.\ $\mathbb{R}^4$ in our example.
For scenario logics, the scene space may not be fixed to a certain dimension.
For example, in monitoring, concrete scenarios with scenes of arbitrary dimensions must be matched, and thus the logic cannot put an a-priori bound on the scenes' dimensionality.

These properties influence how well the formalisms are suited for scenario-based applications.
As a first step, scenarios must be specified: 
Logical scenarios are often constructed bottom-up, generalizing from observed concrete scenarios (data).
Abstract scenarios, on the other hand, are defined by experts and often encode general knowledge (such as traffic rules).
As the history of Section \ref{sec:history} showed, sampling was not an early use case for abstract scenarios -- in fact, sampling expressive constraint systems may even be undecideable (Section \ref{subsec:comparison_sampling}).
Logical scenarios, however, were designed with sampling in mind, which reduces to drawing values from its parameter space.
Monitoring (Section \ref{subsec:comparison_monitoring_iia}) was one of the early applications of abstract scenarios, as temporal logics are often exploited for runtime verification, which is often easier than sampling abstract scenarios.
It is not straightforward to extend monitoring to logical scenarios.
Finally, deductive reasoning has been naturally applied to abstract scenarios, especially if they specify requirements, e.g.\ for consistency \cite{becker2020partial}.
Again, deduction on logical scenarios might be difficult: First, a description language for logical scenarios would need a formal semantics (which is often not the case) and second, this amounts to analyzing dynamical systems.

\section{Conclusion and Future Work}
\label{sec:conclusion}
% !TeX spellcheck = en_US
% !TeX root = ../main.tex

This work revisits the scenario qualifications concrete, logical and abstract commonly used for automated driving and puts them on a formal basis. 
Based thereon, we conducted a comparison between logical and abstract scenarios regarding their expressiveness, complexity of specification, and for sampling and monitoring. 
Future work includes detailed proofs for the theoretical results, a comparison of the formalisms in practice and regarding additional properties, and industrial questions such as the protection of intellectual properties in the world model of abstract scenarios.

\bibliographystyle{elsarticle-num}
\bibliography{Literature}

\begin{thebibliography}{10}
\expandafter\ifx\csname url\endcsname\relax
  \def\url#1{\texttt{#1}}\fi
\expandafter\ifx\csname urlprefix\endcsname\relax\def\urlprefix{URL }\fi
\expandafter\ifx\csname href\endcsname\relax
  \def\href#1#2{#2} \def\path#1{#1}\fi

\bibitem{nalic2020scenario}
D.~Nalic, T.~Mihalj, M.~B{\"a}umler, M.~Lehmann, A.~Eichberger, S.~Bernsteiner,
  {Scenario Based Testing of Automated Driving Systems: A Literature Survey},
  in: FISITA web Congress, Vol.~10, 2020.

\bibitem{neurohr2020fundamental}
C.~Neurohr, L.~Westhofen, T.~Henning, T.~de~Graaff, E.~Möhlmann, E.~Böde,
  {Fundamental Considerations around Scenario-Based Testing for Automated
  Driving}, in: 2020 IEEE Intelligent Vehicles Symposium (IV), 2020, pp.
  121--127.
\newblock \href {https://doi.org/10.1109/IV47402.2020.9304823}
  {\path{doi:10.1109/IV47402.2020.9304823}}.

\bibitem{iso21448}
{International Organization for Standardization}, {ISO 21448: Road vehicles --
  Safety of the intended functionality} ({2022}).

\bibitem{sae2021definitions}
{SAE International}, {J3016: Taxonomy and Definitions for Terms Related to
  Driving Automation Systems for On-Road Motor Vehicles} (2021).

\bibitem{iso26262}
{International Organization for Standardization}, {ISO 26262: Road vehicles --
  Functional safety} ({2018}).

\bibitem{leitner2020validation}
A.~Leitner, D.~Watzenig, J.~Ibanez-Guzman, {Validation and Verification of
  Automated Systems}, Springer International Publishing, Cham. doi 10 (2020)
  978--3.

\bibitem{ulbrich2015defining}
S.~Ulbrich, T.~Menzel, A.~Reschka, F.~Schuldt, M.~Maurer, Defining and
  substantiating the terms scene, situation, and scenario for automated
  driving, in: 2015 IEEE 18th International Conference on Intelligent
  Transportation Systems, 2015, pp. 982--988.
\newblock \href {https://doi.org/10.1109/ITSC.2015.164}
  {\path{doi:10.1109/ITSC.2015.164}}.

\bibitem{dinsae2019}
S.~I. DIN Deutsches Institut~für Normierung, {DIN SAE Spec 91381 -- Terms and
  Definitions Related to Testing of Automated Vehicles Technologies} (2019).

\bibitem{menzel2018scenarios}
T.~Menzel, G.~Bagschik, M.~Maurer, Scenarios for development, test and
  validation of automated vehicles, in: 2018 IEEE Intelligent Vehicles
  Symposium (IV), 2018, pp. 1821--1827.
\newblock \href {https://doi.org/10.1109/IVS.2018.8500406}
  {\path{doi:10.1109/IVS.2018.8500406}}.

\bibitem{asamOpenSCENARIOXML}
{A}ssociation for {S}tandardization~of {A}utomation, {M}easuring {S}ystems,
  \href{https://www.asam.net/standards/detail/openscenario-xml/}{{ASAM}
  {O}pen{SCENARIO} {XML}} (2020).
\newline\urlprefix\url{https://www.asam.net/standards/detail/openscenario-xml/}

\bibitem{neucrit21}
C.~Neurohr, L.~Westhofen, M.~Butz, M.~H. Bollmann, U.~Eberle, R.~Galbas,
  {Criticality Analysis for the Verification and Validation of Automated
  Vehicles}, IEEE Access 9 (2021) 18016--18041.
\newblock \href {https://doi.org/10.1109/ACCESS.2021.3053159}
  {\path{doi:10.1109/ACCESS.2021.3053159}}.

\bibitem{foretellixMSDL}
{Foretellix Ltd.},
  \href{https://www.foretellix.com/wp-content/uploads/2020/07/M-SDL_LRM_OS.pdf}{{Measurable
  Scenario Description Language - Reference Manual}} (2020).
\newline\urlprefix\url{https://www.foretellix.com/wp-content/uploads/2020/07/M-SDL_LRM_OS.pdf}

\bibitem{asamOpenSCENARIODSL}
{A}ssociation for {S}tandardization~of {A}utomation, {M}easuring {S}ystems,
  \href{https://www.asam.net/standards/detail/openscenario-dsl/}{{ASAM
  OpenSCENARIO DSL}} (2022).
\newline\urlprefix\url{https://www.asam.net/standards/detail/openscenario-dsl/}

\bibitem{graubohm2020towards}
R.~Graubohm, T.~Stolte, G.~Bagschik, M.~Maurer, {Towards Efficient Hazard
  Identification in the Concept Phase of Driverless Vehicle Development}, in:
  2020 IEEE Intelligent Vehicles Symposium (IV), 2020, pp. 1297--1304.
\newblock \href {https://doi.org/10.1109/IV47402.2020.9304780}
  {\path{doi:10.1109/IV47402.2020.9304780}}.

\bibitem{viresOpenSCENARIO}
{VIRES Simulationstechnologie GmbH},
  \href{https://www.pegasusprojekt.de/files/tmpl/Pegasus-Abschlussveranstaltung/14_Scenario-Formats.pdf}{{PEGASUS
  -- Scenario Formats}} (2019).
\newline\urlprefix\url{https://www.pegasusprojekt.de/files/tmpl/Pegasus-Abschlussveranstaltung/14_Scenario-Formats.pdf}

\bibitem{weber2019framework}
H.~Weber, J.~Bock, J.~Klimke, C.~Roesener, J.~Hiller, R.~Krajewski, A.~Zlocki,
  L.~Eckstein, A framework for definition of logical scenarios for safety
  assurance of automated driving, Traffic Injury Prevention 20~(sup1) (2019)
  S65--S70.
\newblock \href {https://doi.org/10.1080/15389588.2019.1630827}
  {\path{doi:10.1080/15389588.2019.1630827}}.

\bibitem{zhang2022performance}
P.~Zhang, B.~Zhu, J.~Zhao, T.~Fan, Y.~Sun, {Performance Evaluation Method for
  Automated Driving System in Logical Scenario}, Automotive Innovation 5~(3)
  (2022) 299--310.
\newblock \href {https://doi.org/10.1007/s42154-022-00191-3}
  {\path{doi:10.1007/s42154-022-00191-3}}.

\bibitem{unece157}
{United Nations Economic Commission for Europe (UNECE)}, {UN Regulation No.
  157: Uniform provisions concerning the approval of vehicles with regard to
  Automated Lane Keeping Systems} ({2022}).

\bibitem{hilscher2011abstract}
M.~Hilscher, S.~Linker, E.-R. Olderog, A.~P. Ravn, {An Abstract Model for
  Proving Safety of Multi-lane Traffic Manoeuvres}, in: International
  Conference on Formal Engineering Methods, Springer, 2011, pp. 404--419.

\bibitem{Schwammberger18a}
M.~Schwammberger, {Introducing Liveness into Multi-lane Spatial Logic lane
  change controllers using UPPAAL}, in: M.~Gleirscher, S.~Kugele, S.~Linker
  (Eds.), Proceedings 2nd International Workshop on Safe Control of Autonomous
  Vehicles, SCAV@CPSWeek 2018, Porto, Portugal, 10th April 2018., Vol. 269 of
  {EPTCS}, 2018, pp. 17--31.
\newblock \href {https://doi.org/10.4204/EPTCS.269.3}
  {\path{doi:10.4204/EPTCS.269.3}}.

\bibitem{Schwammberger2019fair}
C.~Bischopink, M.~Schwammberger,
  \href{https://doi.org/10.1007/978-3-030-54994-7\_18}{{Verification of Fair
  Controllers for Urban Traffic Manoeuvres at Intersections}}, in:
  E.~Sekerinski, N.~Moreira, J.~N. Oliveira, D.~Ratiu, R.~Guidotti, M.~Farrell,
  M.~Luckcuck, D.~Marmsoler, J.~Campos, T.~Astarte, L.~Gonnord, A.~Cerone,
  L.~Couto, B.~Dongol, M.~Kutrib, P.~Monteiro, D.~Delmas (Eds.), Formal
  Methods. {FM} 2019 International Workshops - Porto, Portugal, October 7-11,
  2019, Revised Selected Papers, Part {I}, Vol. 12232 of Lecture Notes in
  Computer Science, Springer, 2019, pp. 249--264.
\newblock \href {https://doi.org/10.1007/978-3-030-54994-7\_18}
  {\path{doi:10.1007/978-3-030-54994-7\_18}}.
\newline\urlprefix\url{https://doi.org/10.1007/978-3-030-54994-7\_18}

\bibitem{Fraenzle2015}
M.~Fr{\"a}nzle, M.~R. Hansen, H.~Ody,
  \href{https://doi.org/10.1007/978-3-319-23506-6_11}{No Need Knowing Numerous
  Neighbours}, Springer International Publishing, Cham, 2015, pp. 152--171.
\newblock \href {https://doi.org/10.1007/978-3-319-23506-6_11}
  {\path{doi:10.1007/978-3-319-23506-6_11}}.
\newline\urlprefix\url{https://doi.org/10.1007/978-3-319-23506-6_11}

\bibitem{kemper2014visual}
S.~Kemper, C.~Etzien, A visual logic for the description of highway traffic
  scenarios, in: M.~Aiguier, F.~Boulanger, D.~Krob, C.~Marchal (Eds.), Complex
  Systems Design {\&} Management, Springer International Publishing, Cham,
  2014, pp. 233--245.

\bibitem{damm2017tsc}
W.~Damm, S.~Kemper, E.~Möhlmann, T.~Peikenkamp, A.~Rakow, {Traffic Sequence
  Charts - From Visualization to Semantics}, Tech. rep. (10 2017).
\newblock \href {https://doi.org/10.13140/RG.2.2.15190.42563}
  {\path{doi:10.13140/RG.2.2.15190.42563}}.

\bibitem{damm2018formal}
W.~Damm, E.~M{\"o}hlmann, T.~Peikenkamp, A.~Rakow, A Formal Semantics for
  Traffic Sequence Charts, Springer International Publishing, Cham, 2018, pp.
  182--205.
\newblock \href {https://doi.org/10.1007/978-3-319-95246-8_11}
  {\path{doi:10.1007/978-3-319-95246-8_11}}.

\bibitem{becker2020partial}
J.~S. Becker, \href{https://ceur-ws.org/Vol-2581/ase2020paper1.pdf}{{Partial
  Consistency for Requirement Engineering with Traffic Sequence Charts}}, in:
  Software Engineering Workshops 2020, CEUR Workshop Proceedings, 2020.
\newline\urlprefix\url{https://ceur-ws.org/Vol-2581/ase2020paper1.pdf}

\bibitem{becker2022simulation}
J.~S. Becker, T.~Koopmann, B.~Neurohr, C.~Neurohr, L.~Westhofen, B.~Wirtz,
  E.~Böde, W.~Damm, {Simulation of Abstract Scenarios: Towards Automated
  Tooling in Criticality Analysis}, Zenodo, 2022, pp. 42--51.
\newblock \href {https://doi.org/10.5281/zenodo.5907154}
  {\path{doi:10.5281/zenodo.5907154}}.

\bibitem{damm2018using}
W.~Damm, S.~Kemper, E.~M{\"o}hlmann, T.~Peikenkamp, A.~Rakow,
  \href{https://hal.science/hal-01714060}{{Using Traffic Sequence Charts for
  the Development of HAVs}}, in: {ERTS 2018}, 9th European Congress on Embedded
  Real Time Software and Systems (ERTS 2018), Toulouse, France, 2018.
\newline\urlprefix\url{https://hal.science/hal-01714060}

\bibitem{grundt2022rm}
D.~Grundt, A.~K\"ohne, I.~Saxena, R.~Stemmer, B.~Westphal, E.~M\"ohlmann,
  {Towards Runtime Monitoring of Complex System Requirements for Autonomous
  Driving Functions}, in: {\rm Proceedings Fourth International Workshop on}
  Formal Methods for Autonomous Systems (FMAS), EPTCS, 2022, pp. 53--61.
\newblock \href {https://doi.org/10.4204/EPTCS.371.4}
  {\path{doi:10.4204/EPTCS.371.4}}.

\bibitem{borchers2024using}
P.~Borchers, W.~Hagemann, D.~Grundt, T.~Werner, J.~M{\"u}ller, Using traffic
  sequence charts for knowledge formalization and ai-application, in: K.~Arai
  (Ed.), Intelligent Systems and Applications, Springer Nature Switzerland,
  Cham, 2024, pp. 198--220.

\bibitem{borchers_tsc2carla_2025}
P.~Borchers, T.~Koopmann, L.~Westhofen, J.~S. Becker, L.~Putze, D.~Grundt,
  T.~d. Graaff, V.~Kalwa, C.~Neurohr, {TSC2CARLA}: {An} abstract scenario-based
  verification toolchain for automated driving systems, Science of Computer
  Programming 242 (2025) 103256.
\newblock \href {https://doi.org/https://doi.org/10.1016/j.scico.2024.103256}
  {\path{doi:https://doi.org/10.1016/j.scico.2024.103256}}.

\bibitem{eggers2018constraint}
A.~Eggers, M.~Stasch, T.~Teige, T.~Bienm{\"u}ller, U.~Brockmeyer,
  \href{https://ceur-ws.org/Vol-2189/paper1.pdf}{{Constraint Systems from
  Traffic Scenarios for the Validation of Autonomous Driving}}, in: SC-Square@
  FLOC, 2018, p.~95.
\newline\urlprefix\url{https://ceur-ws.org/Vol-2189/paper1.pdf}

\bibitem{scheibler2019solving}
K.~Scheibler, A.~Eggers, T.~Teige, M.~Walz, T.~Bienm{\"u}ller, U.~Brockmeyer,
  \href{https://ceur-ws.org/Vol-2460/paper2.pdf}{{Solving Constraint Systems
  from Traffic Scenarios for the Validation of Autonomous Driving}}, in:
  Proceedings of the 4th SC-square Workshop, 2019, pp. 2--12.
\newline\urlprefix\url{https://ceur-ws.org/Vol-2460/paper2.pdf}

\bibitem{fremont2019scenic}
D.~J. Fremont, T.~Dreossi, S.~Ghosh, X.~Yue, A.~L. Sangiovanni-Vincentelli,
  S.~A. Seshia, {Scenic: A Language for Scenario Specification and Scene
  Generation}, in: Proceedings of the 40th ACM SIGPLAN Conference on
  Programming Language Design and Implementation, PLDI 2019, Association for
  Computing Machinery, New York, NY, USA, 2019, p. 63–78.
\newblock \href {https://doi.org/10.1145/3314221.3314633}
  {\path{doi:10.1145/3314221.3314633}}.

\bibitem{fremont2023scenic}
D.~J. Fremont, E.~Kim, T.~Dreossi, S.~Ghosh, X.~Yue, A.~L.
  Sangiovanni-Vincentelli, S.~A. Seshia, Scenic: A language for scenario
  specification and data generation, Machine Learning 112~(10) (2023)
  3805--3849.
\newblock \href {https://doi.org/10.1007/s10994-021-06120-5}
  {\path{doi:10.1007/s10994-021-06120-5}}.

\bibitem{pidan2022techniques}
D.~Pidan, C.~R. Disenfeld, Y.~Hollander, Techniques for providing concrete
  instances in traffic scenarios by a transformation as a constraint
  satisfaction problem, {US} Patent App. 17/646,240 (7 2022).

\bibitem{leucker2009brief}
M.~Leucker, C.~Schallhart, A brief account of runtime verification, The journal
  of logic and algebraic programming 78~(5) (2009) 293--303.

\bibitem{kuhtz2009ltl}
L.~Kuhtz, B.~Finkbeiner, Ltl path checking is efficiently parallelizable, in:
  International Colloquium on Automata, Languages, and Programming, Springer,
  2009, pp. 235--246.

\bibitem{de2013linear}
G.~De~Giacomo, M.~Y. Vardi, et~al., Linear temporal logic and linear dynamic
  logic on finite traces., in: Ijcai, Vol.~13, 2013, pp. 854--860.

\bibitem{bauer2013propositional}
A.~Bauer, J.-C. K{\"u}ster, G.~Vegliach, From propositional to first-order
  monitoring, in: International Conference on Runtime Verification, Springer,
  2013, pp. 59--75.

\end{thebibliography}

\clearpage
\appendix
\section*{Appendix}
\label{appendix}

\paragraph{Example of a TSC-based Abstract Scenario}
Beyond the theoretical considerations made in Section \ref{subsec:comparsion_specification_complexity} using our generic formalizations, let us consider an example using a fixed scenario logic. We use TSCs to compresses a factorial-sized set of concrete scenarios into a linearly sized abstract scenario. 
As already mentioned in Section \ref{subsec:history_formal}, TSCs are a visual, yet formal, declarative specification language for abstract scenarios.
We abstain from explaining the semantics of TSCs here, but rather consider a hands-on example that contextualizes the statement of \autoref{thm:specification_complexity}. To understand TSCs it is most important how the objects are positioned relatively to each other. 
%Especially in early development phases, abstract scenario languages like TSCs can reduce efforts in specification drastically.

\begin{figure}[htb!]
	\centering
	\includegraphics[width=\linewidth]{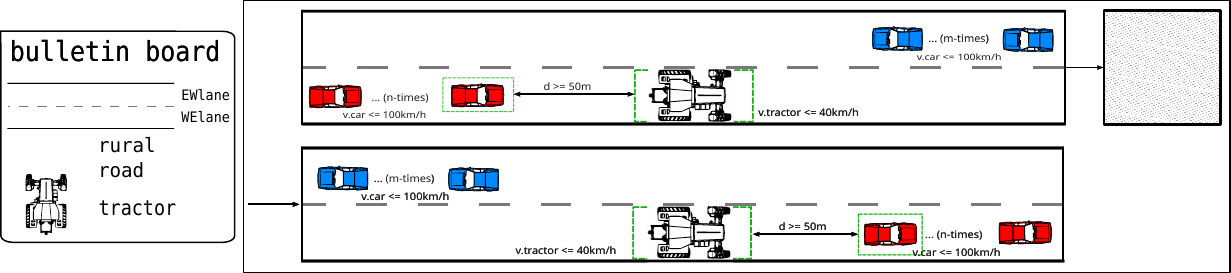}
	\vspace{-0.5cm}
	\caption{Example of an abstract scenario specified as Traffic Sequence Chart (TSC): on a rural road with one lane in each direction, a slow tractor is overtaken by the $n$ red passenger cars with oncoming traffic in form of $m$ blue passenger cars.}
	\label{fig:tsc_example_rural}
\end{figure}

In the example TSC of \autoref{fig:tsc_example_rural}, the combinatorial explosion happens due to spatial combinations on scene level rather on the temporal level as in \autoref{thm:specification_complexity}, highlighting yet another difference in the low specification complexity of abstract scenarios.
\autoref{fig:tsc_example_rural} describes the following abstract scenario $A_{rural}$:
The scenario is located on a rural road with one lane in each direction, called east-west lane (EWlane) and (WElane) as specified in the bulletin board. There are three continuous sequential time phases (\emph{invariant nodes}) in the scenario. In the first invariant node, a tractor jams the WElane due to its speed which is constrained to $v_{\text{tractor}} \le 40\text{km/h}$.
There are $n \in \mathbb N$ red passenger cars in the WElane behind the tractor with distance  $d(\text{red-cars},\text{tractor}) \ge 50\text{m}$  and $m \in \mathbb{N}$ blue passenger cars on the EWlane in front of the tractor.
In the third invariant node the red passenger cars are again on the WElane but constrained to be in front of the tractor, again with $d(\text{red-car},\text{tractor}) \ge 50$ m. Additionally, the blue passenger cars are still on the EWlane but have now passed the tractor. The speed of the passenger cars is constrained by  $v_{\text{car}} \le 100$ km/h.
The second invariant node is \emph{empty}, meaning the happenings between the first and the third invariant node are unconstrained.
Therefore, in any concrete scenario satisfying these constraints, the red cars will overtake the tractor and that the blue cars - being oncoming traffic - interfere with the red cars' overtaking maneuvers.
%Note that neither the red nor the blue passenger cars are fixed in the bulletin board. This mean that their ordering in the first invariant node is not necessarily the same as in the third invariant node.
Note that TSCs allow existentially quantifying objects within the scope of a single invariant (by not fixing them in the bulletin board). By this, the presence of a e.g. a red car simply implies the existence of \emph{any} (and not some specific) red car, and that car might be a different one in a subsequent invariant. Therefore, their ordering in the first invariant node is not necessarily the same as in the third invariant node.

This abstract scenario's sparse constraints leave open many possibilities, for example:
\begin{itemize}
	\item the order of the red cars overtaking the tractor is not constrained
	\item the combinatorics of how many blue cars pass in front of a given red car before overtaking is not constrained
	\item the order of the red/blue passenger cars in the first and in the last invariant node is not constrained (i.e.~there overtaking/fallback maneuvers be happening among the red/blue passengers care before or after the overtaking the tractor)
	\item Red passenger cars could disengage from overtaking the tractor and fall back behind $n' \le n$ of the red cars 
	%\item As the cars are not fixed in the bulletin board, the order of the overtaking maneuvers is not specified.
	%\item There could be overtaking maneuvers happening on the WE lane prior to overtaking the tractor
	%\item There could be overtaking maneuvers among red cars even after overtaking the tractor
	%\item Theoretically, there could even be 'fallback' maneuvers happening among the red or the blue cars
	\item the tractor could come to a stop, i.e.~$v_{\text{tractor}} = 0\text{ km/h}$ and turn into a static obstruction (or even reverse with $v_{\text{tractor}} < 0 \text{ km/h}$)
\end{itemize}

Let us now fix $m, n \in \mathbb N$ and count the possibilities of the triple ('ordering of red cars' overtaking maneuvers', 'combinations of blue cars passing tractor before red cars', 'ordering of red cars after overtaking') which consists of three independent events. As we assume $n \in \mathbb{N}$ red cars, there are $n! = 1 \cdot 2 \cdot \dots \cdot n$ possibilities how the red cars can perform the overtaking and also $n!$ possible orderings after the overtaking. Regarding the oncoming traffic in form of $m \in \mathbb{N}$ blue cars, every red car $k \in \{1,\dots,n\}$ has the option to let $r_k$ blue cars pass before them with $r_0 + r_1 + \dots + r_n = m$, $0 \le r_k \le m$ and $r_0$ is the number of blue cars that no red car waited for. Combinatorially, this corresponds to the number of weak compositions of $m$ into $n+1$ parts which amounts to $\binom{m+n}{n}$ possibilities. Therefore, in total we have a lower bound of
$(n!)^{2} \cdot \binom{m+n}{n} \le |C(A_{\text{rural}})|$
for the set of concrete scenarios. 
Of course, the actual cardinality of $\mathcal{C}(A_{\text{rural}})$ is not finite as variations of parameters such as positions and velocities are from real intervals. However, these combinatorial aspects are on maneuver-level and cannot be easily dealt with. For $n=3$ red cars and $m=2$ blue cars we already have $(3!)^{2} \cdot \binom{5}{3} = 6^{2}\cdot10 = 360$ possibilities. The declarative nature of TSCs allows for these combinatorics to stay implicit on the specification level (rather than explicit). This does not mean that the combinatorial complexity has vanished, but it is delegated to subsequent applications such as sampling \autoref{subsec:comparison_sampling}.

\begin{comment}
	\begin{itemize}
		\item For the $n$ red cars there are $n!$ possible orders in which they can start overtaking the truck
		\item Likewise there are $n!$ possibilities how they can arrange in front of the truck after overtaking
		\item Regarding the oncoming traffic in form of $m$ blue cars, every red car $k$ has the option to let $r_k$ blue cars go first with $r_0 + r_1 + \dots + r_n = m$, $0 \le r_k \le m$ and $r_0$ is the number of blue cars that no red car waited for.
		\item This corresponds to the number of weak compositions of $m$ into $n+1$ parts, therefore $\binom{m+n}{n}$ possibilities
		\item In total, we have $n! + \binom{m+n}{n} + n!$ possibilities \CN{or MULTIPLY HERE $n! \cdot \binom{m+n}{n} \cdot n!$? (I think its a sum, but I'M BAD IN COMBINATORICS}
		\item Therefore, just taking into account the relative positions of the red and blue cars, we obtain a lower bound $$2 \cdot n! + \binom{m+n}{n} \le |C(A_{rural})|\,.$$
		\item Of course, the actual cardinality is not finite as variations of parameters such as positions and velocities are from real intervals.
		\item As as example, with $n = 4$ red cars and $m = 3$ blue cars, we arrive at $4! + \binom{4+3}{4} + 4! = 83$ possibilities of how the rural overtaking scenario can happen
	\end{itemize}
\end{comment}

\paragraph{Modeling the Example as Logical Scenario}

In theory, depending on the models $\Phi$, it might be possible to specify this example abstract scenario imperatively as a logical scenario $L_{\Phi}$ -- using finitely many parameters.
For this, the combinatorics -- which are implicit in the declarative description -- of the abstract scenario of \autoref{fig:tsc_example_rural} would have to be made explicit.
This means encoding in a parameter space $X_{A_{\text{rural}}}$ all the possible parametrizations $x \in X_{A_{\text{rural}}}$ of models $\Phi^{x}$ and of the starting scene $\mathcal{S}_0^{x}$ such that the same set of concrete scenarios is described, i.e.\ $L_{\Phi}(X_{A_{\text{rural}}}) = \mathcal{C}(A_{\text{rural}})$.
The first problem arises with parametrizing the models $\Phi$ such that all these different combinations are even possible. The second problem is the sheer number of possibilities which is factorial in $n$ and $m$. Even for small $n$ and $m$ (e.g.~$(3,2)$), the specification as a logical scenario is practically impossible.
A low-effort approximation could be the passing of a sophisticated aggressiveness parameter to the models controlling the red cars to maybe cover a part of the combinatorics.

\paragraph{Other Modeling Options}
A declarative scenario specification is not the only solution to the combinatorial complexity of our example. One possibility is to use process algebras, e.g.\ communicating sequential processes (CSP), to model the scenario as interaction in a concurrent system. Each car in the example scenario could be modeled as a separate process and the interleaving operator of CSP would implicitly specify all the different possibilities for the interaction without enumerating them explicitly.

\end{document}